\theoremstyle{plain}
\newtheorem{theorem}{Theorem}[section]
\newtheorem{proposition}[theorem]{Proposition}
\newtheorem{corollary}[theorem]{Corollary}
\theoremstyle{definition}
\newtheorem{assumption}[theorem]{Assumption}
\theoremstyle{remark}
\newtheorem{remark}[theorem]{Remark}
\newcommand{\clip}{\mathrm{clip}}
\newcommand{\KL}{D_{\mathrm{KL}}}
\newcommand{\TV}{D_{\mathrm{TV}}}
\newcommand{\TVmax}{D_{\mathrm{TV}}^{\mathrm{max}}}
\newcommand{\KLmax}{D_{\mathrm{KL}}^{\mathrm{max}}}
\newcommand{\norm}[1]{\left\lVert#1\right\rVert}
\newcommand{\red}[1]{\textcolor{red}{#1}}
\newcommand{\blue}[1]{\textcolor{blue}{#1}}
\definecolor{orange}{rgb}{1,0.5,0}
\icmltitlerunning{You May Not Need Ratio Clipping in PPO}
\begin{document}

\twocolumn[
\icmltitle{You May Not Need Ratio Clipping in PPO}



\icmlsetsymbol{equal}{*}

\begin{icmlauthorlist}
\icmlauthor{Mingfei Sun}{ox,ms}
\icmlauthor{Vitaly Kurin}{ox}
\icmlauthor{Guoqing Liu}{ms}
\icmlauthor{Sam Devlin}{ms}
\icmlauthor{Tao Qin}{ms}
\icmlauthor{Katja Hofmann}{ms}
\icmlauthor{Shimon Whiteson}{ox}
\end{icmlauthorlist}

\icmlaffiliation{ox}{Department of Computer Science, University of Oxford}
\icmlaffiliation{ms}{Microsoft Research}

\icmlcorrespondingauthor{Mingfei Sun}{mingfei.sun@cs.ox.ac.uk}

\icmlkeywords{Machine Learning, ICML}

\vskip 0.3in
]



\printAffiliationsAndNotice{}  

\begin{abstract}
Proximal Policy Optimization (PPO) methods learn a policy 
by iteratively performing multiple mini-batch optimization epochs of a surrogate objective 
with one set of sampled data. 
Ratio clipping PPO is a popular variant that clips the probability ratios between the target policy 
and the policy used to collect samples. 
Ratio clipping yields a pessimistic estimate of the original surrogate objective, 
and has been shown to be crucial for strong performance. 
We show in this paper that such ratio clipping may not be a good option 
as it can fail to effectively bound the ratios.
Instead, one can directly optimize the original surrogate objective for	 multiple epochs; the key is to find a proper condition to early stop the optimization epoch in each iteration.
Our theoretical analysis sheds light on how to determine when to stop the optimization epoch,
and call the resulting algorithm Early Stopping Policy Optimization (ESPO). 
We compare ESPO with PPO across many continuous control tasks 
and show that ESPO significantly outperforms PPO. 
Furthermore, we show that ESPO can be easily scaled up to distributed training with many workers, 
delivering strong performance as well. 
\end{abstract}

\section{Introduction}
Proximal Policy Optimization~\citep{schulman2017proximal} methods learn a policy by alternating between two procedures: 
sampling data from the policy, 
and performing multiple epochs of policy optimization on the sampled data -- 
the \emph{epoch} here refers to one round of mini-batch policy update on the full batch data. 
These multiple epochs differentiate PPO from methods based on the policy gradient theorem~\citep{sutton2000policy} 
and could improve sample efficiency in practice. 
PPO has two main variants: 
Kullback-Leibler (KL) regularized policy optimization (KL-PPO), 
and ratio clipping policy optimization (RC-PPO). 
The first variant is a direct result of the monotonic improvement guarantee, as used in Trust Region Policy Optimization (TRPO)~\citep{schulman2015trust},
which describes a way to monotonically improve the policy via a KL-regularized surrogate objective. 
The use of such regularization is crucial~\citep{neu2017unified}, 
as it clearly defines a trust region constraint in each optimization epoch~\citep{schulman2017proximal}, 
and has been shown to be essential for the converngece of PPO~\citep{liu2019neural}.

By contrast, RC-PPO has no explicit regularization in its optimization:
it merely clips the ratios between the policy to be optimized and the policy used to collect samples in each epoch. 
The objective with these clipped probability ratios forms a pessimistic estimate of the surrogate objective proposed in Conservative Policy Iteration (CPI)~\citep{kakade2002approximately}. 
Compared to KL-PPO and TRPO, ratio clipping avoids expensive computation of the KL divergence
and thus scales to large problem settings~\citep{berner2019dota,ye2020mastering}. 
It also yields stronger empirical performance than its KL regularized counterpart~\citep{schulman2017proximal}.

However, it remains largely unclear how ratio clipping relates to KL regularization, 
and whether/how a trust region constraint is implicitly enforced via ratio clipping. 
Existing studies, including the original PPO paper~\citep{schulman2017proximal}, 
interpret this ratio clipping as a way to bound the ratios~\citep{queeney2021generalized,hessel2021muesli}. 
Many other studies, however, suggest contradictory conclusions: 
the ratios could become unbounded via ratio clipping~\citep{wang2020truly,engstrom2020implementation,tomar2020mirror}, 
and, even if the ratios are bounded, a KL-based trust region constraint could still be violated~\citep{wang2020truly}. 
Furthermore, several studies point out that RC-PPO may benefit from code-level optimizations,
rather than ratio clipping itself~\citep{engstrom2020implementation,andrychowicz2020matters}.

In this paper, we show that ratio clipping may not be needed. 
One can instead directly optimize the surrogate objective adopted in CPI. 
The key is to find a proper condition to early stop the multiple epochs of surrogative objective optimization. 
Specifically, we first show that ratio clipping itself is not sufficient to 
remove the incentive of updating ratios beyond the clipping range. 
In fact, the ratios could grow without bound as the optimization epoch proceeds.
We then show that, even though the ratios are bounded with 
a proper number of optimization epochs and a learning rate annealing strategy, 
the resulting total variation (TV) divergence is not effectively bounded. 

We also propose a novel policy optimization method, 
Early Stopping Policy Optimization (ESPO), 
which simply computes a ratio deviation to effectively determine 
when to stop the optimization epoch. 
The proposed ratio deviation is theoretically derived from a novel monotonic improvement guarantee and
can be effectively estimated without resorting to any complex computations, e.g., KL in TRPO. 
We evaluate the performance of ESPO in many high-dimensional control benchmarks 
and show that ESPO consistently outperforms PPO. 
Moreover, we show that ESPO scales up easily in distributed training 
to many workers~\citep{espeholt2018impala} and its performance is similar to or better than that of Distributed PPO~\citep{heess2017emergence}.

\section{Related work}
\paragraph{Regularized Policy optimization.}
Regularized policy optimization  learns a policy directly with a regularized objective function.
Kakade and Langford~\yrcite{kakade2002approximately} show that,
with the objective defined as \emph{policy advantage}, a.k.a.\ a surrogate objective,
one could improve a policy by linearly mixing it 
with the policy that optimizes the surrogate objective,
i.e., Conservative Policy Iteration (CPI)~\citep{kakade2002approximately}. 
\citet{pirotta2013safe} propose the optimal linear mixing that 
guarantees a robust monotonic improvement. 

Trust Region Policy Optimization (TRPO)~\citep{schulman2015trust} relaxes this restrictive linear mixing 
by using the Total Variation (TV) divergence between two stochastic policies for regularization. 
The same regularization was also adopted in Constrained Policy Optimization (CPO)~\cite{achiam2017constrained}. 
With the TV divergence as regularization, 
TRPO and CPO present two types of monotonic improvement guarantee, 
one relying on the maximum TV over all state-action pairs~\citep{schulman2015trust} and 
the other relying on the average TV divergence over all empirical samples~\citep{achiam2017constrained}. 
As TV divergence can be upper bounded by KL divergence, 
in practice, TRPO relies on the KL divergence to update the policies.  
This KL divergence has thus been used for theoretical analysis in many studies,
e.g.,~\citet{liu2019neural,wang2020truly}. 

Recently, a more generic regularization has been proposed in Mirror Descent Policy Optimization (MDPO)~\citep{tomar2020mirror}, 
which uses a Bregman divergence to regularize the policy update, providing a unified view of the regularization used in various policy optimization methods. 
See ~\citet{geist2019theory} for more regularization theory.

\paragraph{Clipping and trust region constraint.}
Unlike the explicit regularization used in regularized policy optimization methods, 
Proximal Policy Optimization (PPO)~\citep{schulman2017proximal} exploits a simple scheme: 
clipping probability ratios between policies at each optimization epoch. 
Although this scheme produces strong empirical performance, 
it remains largely unknown how ratio clipping relates to trust region enforcement.
Existing studies, including the original PPO paper~\citep{schulman2017proximal}, 
related this clipping scheme to the bounding of ratios~\citep{queeney2021generalized,hessel2021muesli}. 
Many other studies, however, suggest the ratios could become unbounded via this clipping~\citep{wang2020truly,engstrom2020implementation,tomar2020mirror}. 
\citet{wang2020truly} show that, even if the ratios are bounded, the trust region constraint, defined with the KL as in TRPO, 
could still be violated. 
Ratio clipping has also been interpreted 
as a way to equivalently enforce a TV divergence~\cite{queeney2021generalized,hessel2021muesli}
or approximately apply a hinge loss~\cite{yao2021hinge}. 

In this paper, we show that a TV divergence can only be enforced by this clipping scheme 
if all hyperparameters are properly set. 
\citet{akrour2019projections} point out that the use of clipping in PPO 
can remove the incentive to change the policy past a certain total variation threshold,
and thus implicitly enforces a trust region constraint 
if the step-size of gradient descent is properly set. 
However, we show that, even with proper hyperparameter sweeping, 
clipping can still fail as the number of optimization epochs increases in an iteration.

\citet{hessel2021muesli} apply clipping to the advantage function 
when updating the policy via Maximum a Posterior policy Optimization (MPO). 
The use of a clipped advantage rigorously defines a total variation distance between 
the posterior policy and the prior. 
However, clipping the advantage is different from clipping ratios, 
and their analysis does not apply to ratio clipping PPO.

\section{Background}
We model reinforcement learning as an infinite-horizon discounted Markov decision process (MDP) 
$\{\mathcal{S}, \mathcal{A}, P, r, p_0\}$, 
where $\mathcal{S}$ is a finite set of states,
$\mathcal{A}$ is a finite set of actions,
$P: \mathcal{S} \times \mathcal{A} \times \mathcal{S} \rightarrow \mathbb{R}$ 
is the transition probability distribution, 
$r: \mathcal{S}\times\mathcal{A}\rightarrow\mathbb{R}$ is the reward function, 
and $p_0:\mathcal{S}\rightarrow\mathbb{R}$ is the initial state distribution. 
The performance for a policy $\pi(a| s)$ is defined as:
$J(\pi) \triangleq \mathbb{E}_{s_0\sim p_0}\big[ \sum_{t=0}^{\infty}\gamma^t r(s_t, a_t ) \big]$, 
where $\gamma\in [0, 1)$ is the discount factor. 
The action-value function $Q_\pi$ 
and value function $V_\pi$ are defined as:
\begin{align*}
Q_{\pi}(s_t, a_t) &\triangleq \mathbb{E}_{\substack{s_{t+1}\sim p(\cdot|s_t,a_t),\\a_{t+1}\sim\pi(\cdot|s_{t+1})}}\Big[ \sum_{l=0}^{\infty} \gamma^l r(s_{t+l}, a_{t+l}) \Big], \\
V_{\pi}(s_t) &\triangleq \mathbb{E}_{a_t\sim\pi(\cdot|s_t)}\Big[Q_{\pi}(s_t, a_t) \Big]. 
\end{align*}
Define the advantage function as $A_{\pi}(s, a) \triangleq Q_{\pi}(s, a) - V_{\pi}(s)$. 
Let the discounted state distribution be 
$d_{\pi}(s) \triangleq (1-\gamma) \sum_{t=0}^{\infty} \gamma^t P(s_t=s|\pi)$, 
which allows us to rewrite the performance of policy $\pi$ as 
\begin{equation*}
    J(\pi) = \frac{1}{1-\gamma} \mathbb{E}_{s\sim d_{\pi}, a\sim\pi}[r(s, a)]. 
\end{equation*}
We use $(s, a)\sim d_{\pi}$ to denote $s\sim d_{\pi}, a\sim\pi$. 
Moreover, the following useful identity expresses the expected return of another policy $\tilde{\pi}$ 
in terms of the advantage over $\pi$~\citep{kakade2002approximately}:
\begin{equation}\label{equ:performance-identity}
J(\tilde{\pi}) = J(\pi) + \frac{1}{1-\gamma} \sum_{s}d_{\tilde{\pi}}(s)\sum_{a}\tilde{\pi}(a|s) A_{\pi}(s, a), 
\end{equation}
where $d_{\tilde{\pi}}(s)$ is the discounted state distribution induced by $\tilde{\pi}$.
The complex dependency of $d_{\tilde{\pi}}(s)$ on $\tilde{\pi}$ 
makes the righthand side difficult to optimize directly. 
\citet{schulman2015trust} proposed to consider the following surrogate objective
\begin{align*}
L_{\pi}(\tilde{\pi}) &= J(\pi) + \frac{1}{1-\gamma} \sum_{s}d_{\pi}(s)\sum_{a}\tilde{\pi}(a|s)A_{\pi}(s, a) \\
&= J(\pi) + \frac{1}{1-\gamma} \mathbb{E}_{(s, a)\sim d_\pi}\Big[\frac{\tilde{\pi}(a|s)}{\pi(a|s)}A_{\pi}(s, a) \Big],
\end{align*}
where the $d_{\tilde{\pi}}$ is replaced with $d_{\pi}$. 
Define $\TVmax(\pi, \tilde{\pi})\triangleq \max_{s} \TV\big(\pi(\cdot|s), \tilde{\pi}(\cdot|s)\big)$, 
where $\TV$ is the total variation (TV) divergence. 
\begin{theorem}\label{theo:trpo-theorem}
(Theorem 1 in \citet{schulman2015trust}) Let $\alpha = \TVmax(\pi, \tilde{\pi})$. 
Then the following bound holds
\begin{equation*}
J(\tilde{\pi}) \geq L_{\pi}(\tilde{\pi}) - \frac{4\xi \gamma}{(1-\gamma)^2} \alpha^2,
\end{equation*}
where $\xi=\max_{s,a}\lvert A_\pi(s, a)\rvert$. 
\end{theorem}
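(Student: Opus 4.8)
The plan is to reproduce the coupling argument from the appendix of \citet{schulman2015trust}. First I would invoke the coupling characterization of total variation: for every state $s$, since $\TV\big(\pi(\cdot|s),\tilde\pi(\cdot|s)\big)\le\alpha$, there is a joint distribution over a pair of actions $(a,\tilde a)$ with marginals $\pi(\cdot|s)$ and $\tilde\pi(\cdot|s)$ such that $\Pr(a\neq\tilde a\mid s)\le\alpha$. Gluing these per-state couplings along a trajectory yields a coupled process that runs $\pi$ and $\tilde\pi$ side by side, starting from a common $s_0\sim p_0$ and keeping the two state sequences identical up to the first time the sampled actions disagree.

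Second, I would put the gap $J(\tilde\pi)-L_{\pi}(\tilde\pi)$ into trajectory form. Writing $\bar A(s)\triangleq\mathbb{E}_{a\sim\tilde\pi(\cdot|s)}[A_\pi(s,a)]$ and unrolling the identity \eqref{equ:performance-identity} over time, one gets $J(\tilde\pi)=J(\pi)+\sum_{t\ge0}\gamma^t\,\mathbb{E}_{s_t\sim\tilde\pi}[\bar A(s_t)]$ and $L_\pi(\tilde\pi)=J(\pi)+\sum_{t\ge0}\gamma^t\,\mathbb{E}_{s_t\sim\pi}[\bar A(s_t)]$, so that
\[
J(\tilde\pi)-L_\pi(\tilde\pi)=\sum_{t\ge0}\gamma^t\Big(\mathbb{E}_{s_t\sim\tilde\pi}[\bar A(s_t)]-\mathbb{E}_{s_t\sim\pi}[\bar A(s_t)]\Big).
\]

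Third, I would bound each summand using two facts. Since $\mathbb{E}_{a\sim\pi(\cdot|s)}[A_\pi(s,a)]=0$, expressing $\bar A(s)$ through the coupling gives $|\bar A(s)|=\big|\mathbb{E}\big[(A_\pi(s,\tilde a)-A_\pi(s,a))\,\mathbf{1}\{a\neq\tilde a\}\big]\big|\le 2\alpha\xi$. Next, along the coupled trajectories let $E_t$ be the event that the two action sequences agree at every step before $t$; then $\Pr(E_t)\ge(1-\alpha)^t$, and conditioned on $E_t$ the state $s_t$ has the same law under $\pi$ and $\tilde\pi$, so the contributions on $E_t$ cancel between the two expectations. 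What remains is bounded by $\Pr(\neg E_t)\cdot\big(2\cdot 2\alpha\xi\big)\le\big(1-(1-\alpha)^t\big)\,4\alpha\xi$.

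Finally I would sum the geometric series:
\[
\big|J(\tilde\pi)-L_\pi(\tilde\pi)\big|\le 4\alpha\xi\sum_{t\ge0}\gamma^t\big(1-(1-\alpha)^t\big)=4\alpha\xi\Big(\tfrac{1}{1-\gamma}-\tfrac{1}{1-\gamma(1-\alpha)}\Big)=\tfrac{4\gamma\alpha^2\xi}{(1-\gamma)(1-\gamma+\gamma\alpha)}\le\tfrac{4\xi\gamma}{(1-\gamma)^2}\alpha^2,
\]
which is the claimed bound. The step I expect to be the main obstacle is the third one: making the ``split on $E_t$, cancel the agreeing part'' argument precise — one must check that the coupling really keeps the conditional state marginals equal on $E_t$, and that the first-disagreement time interacts correctly with the per-step disagreement bound $\alpha$ (in particular that $\Pr(E_t)\ge(1-\alpha)^t$ despite the dependence across steps). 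The remaining pieces — the trajectory rewriting, the $|\bar A(s)|\le 2\alpha\xi$ estimate, and the geometric sum — are routine bookkeeping.
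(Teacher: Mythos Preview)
Your proposal is correct and is exactly the coupling argument from the appendix of \citet{schulman2015trust}; the present paper does not supply its own proof of this theorem but simply cites it as Theorem~1 of that reference. The only point worth tightening is the construction of the coupling after the first disagreement (so that the marginals of the two coordinates really are the $\pi$- and $\tilde\pi$-trajectory laws), but you have already flagged this, and once the per-state couplings are glued with a common transition on agreement and independent evolution afterward, all of your estimates go through verbatim.
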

This theorem forms the foundation of policy optimization methods, 
including Trust Region Policy Optimization (TRPO)~\citep{schulman2015trust} 
and Proximal Policy Optimization (PPO)~\citep{schulman2017proximal}. 
TRPO suggests a robust way to take large update steps by using a constraint, 
rather than a regularization, over the policies. 
Specifically, as the TV divergence can be upper bounded by the KL divergence, 
i.e., $[\TVmax(\pi, \tilde{\pi})]^2 \leq \frac{1}{2} \KLmax(\pi, \tilde{\pi}) $, 
TRPO constrains the KL divergence instead. 
It also leverages the average KL divergence rather than the max divergence 
as the former can be more easily estimated from samples:
\begin{align}\label{equ:trpo-objective}
\max_{\tilde{\pi}}& \quad \mathbb{E}_{(s, a)\sim d_{\pi}}\Big[\frac{\tilde{\pi}(a|s)}{\pi(a|s)}A_{\pi}(s, a) \Big], \nonumber \\
\text{s.t.}& \quad \mathbb{E}_{s\sim d_{\pi}} \big[ \KL(\pi(\cdot| s), \tilde{\pi}(\cdot| s)) \big] \leq \beta, 
\end{align}
where $\beta$ is a hyperparameter. 
Still, this constrained optimization is complicated as it requires 
using conjugate gradient algorithms to approximate KL divergence constraint. 
KL regularized PPO (KL-PPO) instead considers the following regularized policy optimization with average KL divergence
\begin{equation}\label{equ:kl-ppo-objective}
\max_{\tilde{\pi}} \mathbb{E}_{(s, a)\sim d_{\pi}}\Big[\frac{\tilde{\pi}(a|s)}{\pi(a|s)}A_{\pi}(s, a) - \beta \KL(\pi(\cdot| s), \tilde{\pi}(\cdot| s))\Big]. 
\end{equation}
Ratio clipping PPO (RC-PPO) simplifies the above by clipping ratios 
$\lambda_{\tilde{\pi}}\triangleq\frac{\tilde{\pi}(a|s)}{\pi(a|s)}$
to form a lower bound of $L_{\pi}(\tilde{\pi})$: 
\begin{equation}\label{equ:ppo-objective}
\max_{\tilde{\pi}} \mathbb{E}_{d_{\pi}} \big[ \min\big(\lambda_{\tilde{\pi}} A_{\pi}, \clip(\lambda_{\tilde{\pi}}, 1-\epsilon, 1+\epsilon)A_{\pi} \big) \big], 
\end{equation}
where $\epsilon$ is a hyperparameter for the clipping range.

\section{Understanding ratio clipping}
PPO performs multiple mini-batch updates (i.e., epochs) of policy updates on the same set of samples.
With proper regularization in each optimization epoch, e.g., a trust region constraint, monotonic improvement can be guaranteed~\citep{neu2017unified,geist2019theory}. 
We thus first investigate whether ratio clipping can enforce a trust region constraint in those optimization epochs. 
We also point out discrepancies between the existing theory and practice.

\subsection{Condition to enforce a trust region constraint}
The first question about ratio clipping is 
whether any trust region constraint is effectively enforced in each training epoch. 
To investigate this, we first check the monotonic improvement guarantee in TRPO~\citep{schulman2015trust}. 
Specifically, the improvement guarantee underlying TRPO imposes a KL divergence regularization 
between the policy to be optimized and the policy used to collect samples. 
For RC-PPO, \citet{wang2020truly} show that the KL divergence 
can never be bounded between policies in optimization epochs.
However, the monotonic improvement guarantee adopted in TRPO effectively relies on the TV divergence, 
(as defined in Theorem~\ref{theo:trpo-theorem}), 
not on the KL divergence. 
To be more specific, Theorem~\ref{theo:trpo-theorem} requires the maximum TV divergence over the whole state space to be bounded. 
We have the following:
\begin{proposition}\label{prop:ratio-tv-bound}
If $\lambda(s, a) \triangleq \frac{\tilde{\pi}(a|s)}{\pi(a|s)}$ is bounded by $M_1$ and $M_2$, 
i.e., $M_1 \leq \lambda(s, a) \leq M_2$ for $\forall (s, a)$ 
where $M_1\in(0, 1)$ and $M_2\in(1,\infty)$, 
then the following bound holds:
$\TVmax\big(\pi(\cdot|s), \tilde{\pi}(\cdot|s)\big) \leq \min(\frac{1}{M_1} -1 , M_2 -1)$. 
\end{proposition}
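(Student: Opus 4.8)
The plan is to work directly from the definition $\TV(p, q) = \frac{1}{2}\sum_a \lvert p(a) - q(a) \rvert$ together with the elementary identity that this quantity also equals $\sum_{a : q(a) \ge p(a)} \big(q(a) - p(a)\big)$, i.e.\ the total probability mass ``moved upward'' coincides with the total mass ``moved downward''. Fixing an arbitrary state $s$, I would partition the action set into $A^+ = \{a : \tilde{\pi}(a|s) \ge \pi(a|s)\}$ and $A^- = \{a : \tilde{\pi}(a|s) < \pi(a|s)\}$, so that
\[
\TV\big(\pi(\cdot|s), \tilde{\pi}(\cdot|s)\big) = \sum_{a \in A^+}\big(\tilde{\pi}(a|s) - \pi(a|s)\big) = \sum_{a \in A^-}\big(\pi(a|s) - \tilde{\pi}(a|s)\big).
\]

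For the first representation, write $\tilde{\pi}(a|s) - \pi(a|s) = \big(\lambda(s,a) - 1\big)\pi(a|s)$; since $\lambda(s,a) \le M_2$ on $A^+$, each term is at most $(M_2 - 1)\pi(a|s)$, and summing over $a \in A^+$ with $\sum_{a \in A^+}\pi(a|s) \le 1$ gives $\TV\big(\pi(\cdot|s), \tilde{\pi}(\cdot|s)\big) \le M_2 - 1$. For the second representation the key move is to express $\pi$ in terms of $\tilde{\pi}$: since $\tilde{\pi}(a|s) = \lambda(s,a)\pi(a|s) \ge M_1 \pi(a|s)$, we have $\pi(a|s) \le \tilde{\pi}(a|s)/M_1$, hence $\pi(a|s) - \tilde{\pi}(a|s) \le \big(\tfrac{1}{M_1} - 1\big)\tilde{\pi}(a|s)$; summing over $a \in A^-$ with $\sum_{a \in A^-}\tilde{\pi}(a|s) \le 1$ gives $\TV\big(\pi(\cdot|s), \tilde{\pi}(\cdot|s)\big) \le \tfrac{1}{M_1} - 1$. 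Combining the two bounds yields $\TV\big(\pi(\cdot|s), \tilde{\pi}(\cdot|s)\big) \le \min\big(\tfrac{1}{M_1} - 1, M_2 - 1\big)$ for every $s$, and taking the maximum over $s$ gives the claimed bound on $\TVmax$.

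The only subtlety, and the step I would be most careful about, is which side of the partition to pair with which hypothesis: the upper bound $M_2$ naturally controls the upward-moved mass (yielding $M_2 - 1$), whereas controlling the downward-moved mass must go through the lower bound $M_1$ after rewriting $\pi(a|s) \le \tilde{\pi}(a|s)/M_1$ (yielding $\tfrac{1}{M_1} - 1$, which is the correct constant, as opposed to the naive $1 - M_1$ one would get by bounding $\pi(a|s)$ from above by $1$ in the wrong place). Everything else is routine, and the argument uses nothing beyond the normalizations $\sum_a \pi(a|s) = \sum_a \tilde{\pi}(a|s) = 1$ and the stated two-sided bound on $\lambda$.
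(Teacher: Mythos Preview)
Your proof is correct and follows essentially the same route as the paper: both invoke the identity $\TV\big(\pi(\cdot|s),\tilde{\pi}(\cdot|s)\big)=\sum_{a:\tilde{\pi}\ge\pi}(\tilde{\pi}-\pi)=\sum_{a:\tilde{\pi}<\pi}(\pi-\tilde{\pi})$, bound the first representation via $\lambda\le M_2$ against the $\pi$-mass, and bound the second via $\lambda\ge M_1$ (equivalently $\pi\le\tilde{\pi}/M_1$) against the $\tilde{\pi}$-mass. The one difference is that the paper partitions the action set by the sign of the advantage $A_\pi(s,a)$ and relies on an additional actor--critic assumption to the effect that $\tilde{\pi}(a|s)>\pi(a|s)$ exactly when $A_\pi(s,a)>0$, whereas you partition directly by the sign of $\tilde{\pi}(a|s)-\pi(a|s)$. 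Your version is cleaner: it makes the proposition a pure statement about distributions and ratio bounds, independent of how $\tilde{\pi}$ was produced from $\pi$, and it avoids the extraneous assumption altogether.
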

We provide the proof in Appendix~\ref{app:ratio-tv-bound}. 
As $\TV$ is a bounded divergence between $[0, 1]$, 
the ratio guarantee makes sense when $M_1\geq 0.5$ or $M_2 \leq 2.0$. 
This corollary provides a sufficient condition to 
constrain the TV divergence via bounding the ratios. 
Unlike the KL divergence used in TRPO, 
the maximum TV divergence can be well bounded as long as the ratios are bounded. 

\begin{remark}
If ratios are well bounded between $0.5$ and $2.0$, a trust region defined by TV divergence is then effectively enforced. 
\end{remark}

\subsection{Does ratio clipping satisfy the condition?}
The PPO paper~\citep{schulman2017proximal} suggests the following clipping scheme: 
given a hyperparameter for clipping $\epsilon$, 
the clipping range is then given by $[1-\epsilon, 1+\epsilon]$. 
\citet{schulman2017proximal} propose this ratio clipping to remove the incentive 
for updating the ratio out of some range. 
If the ratios are to be bounded between $[1-\epsilon, 1+\epsilon]$ ($\epsilon<1$), 
then according to Proposition~\ref{prop:ratio-tv-bound}, we have
\begin{equation*}
\TVmax(\pi, \tilde{\pi}) \leq \min(\frac{\epsilon}{1-\epsilon}, \epsilon) \leq \epsilon. 
\end{equation*}
Proposition~\ref{prop:ratio-tv-bound} suggests another way to bound ratios:
$\frac{1}{1+\epsilon} \leq \lambda \leq 1+\epsilon$. 
The resulting objective is:
\begin{equation}\label{equ:new-clipping-objective}
\max_{\tilde{\pi}} \mathbb{E}_{d_{\pi}} \big[ \min\big(\lambda_{\tilde{\pi}} A_{\pi}, \clip(\lambda_{\tilde{\pi}}, \frac{1}{1+\epsilon}, 1+\epsilon)A_{\pi} \big) \big], 
\end{equation}
where $\epsilon$ is a hyperparameter to specify the clipping range. 

\begin{figure}
    \centering
    \includegraphics[width=0.75\linewidth]{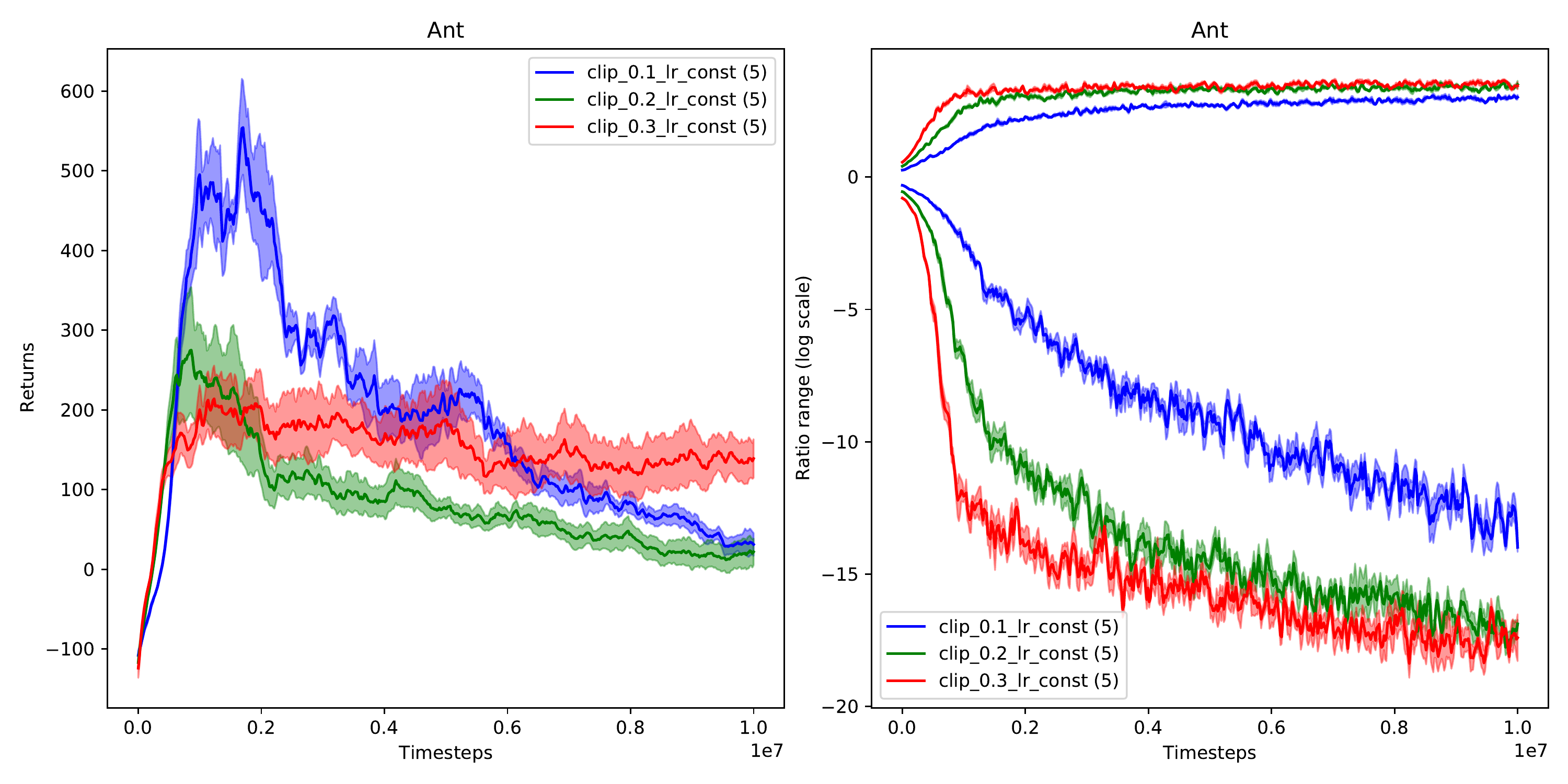}
    \caption{Empirical returns and ratio ranges (log scale) of RC-PPO trained on Ant-v2 (with constant learning rate $0.0003$). }
    \label{fig:ant-ratio-range-no-decay}
\end{figure}

\begin{figure}
    \centering
    \includegraphics[width=0.75\linewidth]{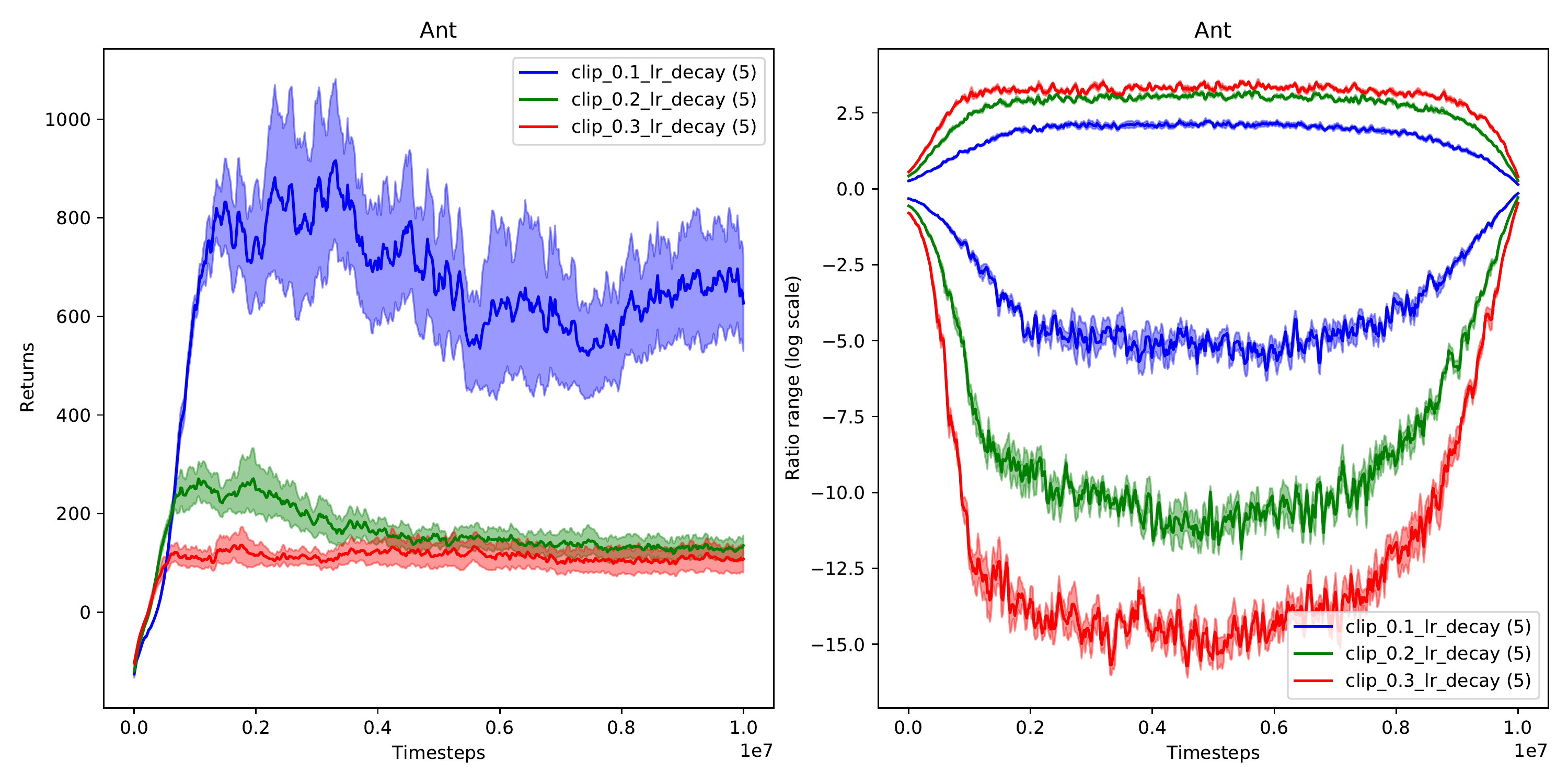}
    \caption{Empirical returns and ratio ranges (log scale) of RC-PPO trained on Ant-v2 with different clipping values; the learning rate decays linearly from $0.0003$ to $0$. }
    \label{fig:ant-ratio-range}
\end{figure}

Though this ratio bounding interpretation has been widely discussed~\citep{akrour2019projections,queeney2021generalized}, 
we find that it may not be true in practice:
our empirical results show that the ratios can easily depart from the range $[0.5, 2.0]$, 
which can make the bound on the maximum TV divergence too loose to be nontrivial according to the above corollary. 
Figure~\ref{fig:ant-ratio-range-no-decay} shows the training curves of ratio clipping PPO on Mujoco Ant domain, 
with different hyperparameters for the clipping and a fixed learning rate $0.0003$. 
It suggests that the clipping range has a big impact on training performance.
However, clipping alone (without any learning rate annealing scheme) fails to bound the ratios. 
Even with a proper annealing scheme, e.g., a linear learning rate decay as used in Figure~\ref{fig:ant-ratio-range},
the ratios can still go beyond the scope of $[0.5, 2.0]$. 
For example, for clipping range $0.1$ with which PPO achieves the best performance,
the maximum ratio reaches $e^{1.5}\approx 4.482$ and the minimum ratio $e^{-5.0}\approx 0.007$.
This ratio range imposes only a trivial bound on the maximum TV divergence. 
Furthermore, the ratio range is also highly sensitive to the number of optimization epochs performed in each iteration. 
Specifically, as the number of optimization epochs increases, 
the ratios grow without bound even with the same set of the clipping value and the learning annealing scheme. 
Refer to Figure~\ref{fig:ant-ratio-range-epoch} in Appendix~\ref{app:more-results} for more details. 
\begin{remark}
    In practice, the ratios in PPO are not bounded such that the resulting maximum TV divergence is nontrivially bounded. 
    Furthermore, ratio clipping itself is not sufficient for strong performance. 
\end{remark}

\section{Ratio-regularized policy optimization}
Before introducing our new policy optimization algorithm, 
we first revisit the underlying idea in PPO. 
As pointed out by~\citet{schulman2017proximal}, 
PPO reuses samples for multiple epochs of policy optimization. 
This reuse of samples, however, does not come for free. 
Theorem~\ref{theo:trpo-theorem} suggests that,
whenever the policy is updated, it deviates from the policy used to collect samples, i.e., behaviour policy, and thus
reusing those samples for the next optimization epoch yields a weaker lower bound for policy improvement. 
If one proceeds with multiple epochs anyway, 
the lower bound in Theorem~\ref{theo:trpo-theorem} would not guarantee any improvement of the policy~\citep{kakade2002approximately,pirotta2013safe}.  
This inspires us to develop an algorithm that finds the proper time to stop sample reuse, 
before the lower bound becomes trivial. 
However, the performance lower bound presented in Theorem~\ref{theo:trpo-theorem} relies on the maximum TV divergence, 
which is intractable to estimate in practice. 
In the following, we present a new monotonic improvement guarantee with a more practical lower bound. 

\subsection{Ratio-regularized improvement guarantee}
We present the ratio-regularized improvement guarantee. 
\begin{theorem}\label{theo:ratio-reg-guarantee}
For any policies $\tilde{\pi}$ and $\pi$, 
the following bound holds:
\begin{multline}\label{equ:ratio-monotonic-bound}
J(\tilde{\pi}) - J(\pi) \geq \frac{1}{1-\gamma}\Big\{ \mathbb{E}_{(s, a)\sim d_\pi}\Big[ \frac{\tilde{\pi}(a|s)}{\pi(a|s)} A_{\pi}(s, a) \Big] \\
- C \cdot \mathbb{E}_{(s, a)\sim d_{\pi} }\left|\frac{\tilde{\pi}(a|s)}{\pi(a|s)} - 1 \right| \Big\}. 
\end{multline}
where $C=\frac{\xi \gamma}{1-\gamma}$, $\xi=\max_{s,a}\lvert A_{\pi}(s, a)\rvert$.
\end{theorem}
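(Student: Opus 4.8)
The plan is to start from the exact performance identity \eqref{equ:performance-identity} and bound the error incurred by replacing the state distribution $d_{\tilde\pi}$ with $d_\pi$. Writing
\[
J(\tilde\pi) - J(\pi) = \frac{1}{1-\gamma}\sum_s d_{\tilde\pi}(s)\sum_a \tilde\pi(a|s)A_\pi(s,a),
\]
I would split this into the surrogate term $\frac{1}{1-\gamma}\mathbb{E}_{(s,a)\sim d_\pi}\big[\lambda_{\tilde\pi}A_\pi\big]$ plus a remainder $\frac{1}{1-\gamma}\sum_s\big(d_{\tilde\pi}(s)-d_\pi(s)\big)\sum_a\tilde\pi(a|s)A_\pi(s,a)$. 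The job is then to lower-bound that remainder by $-\frac{C}{1-\gamma}\,\mathbb{E}_{(s,a)\sim d_\pi}\big|\lambda_{\tilde\pi}-1\big|$.

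The key steps, in order, would be: (i) bound the inner sum $\big|\sum_a\tilde\pi(a|s)A_\pi(s,a)\big|$ — note $\sum_a\pi(a|s)A_\pi(s,a)=0$, so $\sum_a\tilde\pi(a|s)A_\pi(s,a) = \sum_a(\tilde\pi(a|s)-\pi(a|s))A_\pi(s,a)$, which is at most $2\xi\,\TV(\pi(\cdot|s),\tilde\pi(\cdot|s))$ in absolute value; (ii) bound the state-distribution shift $\|d_{\tilde\pi}-d_\pi\|_1$ in terms of the per-state TV divergences — the standard argument (e.g.\ in \citet{achiam2017constrained}) gives $\|d_{\tilde\pi}-d_\pi\|_1 \le \frac{2\gamma}{1-\gamma}\,\mathbb{E}_{s\sim d_\pi}\TV(\pi(\cdot|s),\tilde\pi(\cdot|s))$; (iii) combine (i) and (ii), being careful about which state distribution the expectation is taken over so that the $\gamma$-dependent coupling lands on the $d_\pi$-expectation; and (iv) convert the TV divergence back to the ratio-deviation form via $\TV(\pi(\cdot|s),\tilde\pi(\cdot|s)) = \frac12\sum_a|\tilde\pi(a|s)-\pi(a|s)| = \frac12\sum_a\pi(a|s)\big|\lambda_{\tilde\pi}(s,a)-1\big|$, so that $\mathbb{E}_{s\sim d_\pi}\TV(\pi(\cdot|s),\tilde\pi(\cdot|s)) = \frac12\,\mathbb{E}_{(s,a)\sim d_\pi}\big|\lambda_{\tilde\pi}-1\big|$. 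Tracking the constants through (i)–(iv) should produce exactly $C=\frac{\xi\gamma}{1-\gamma}$.

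The main obstacle is step (iii): getting the constant sharp and the bookkeeping right when chaining the two inequalities, since step (ii) produces a factor $\mathbb{E}_{s\sim d_\pi}[\cdot]$ while step (i) is a pointwise (worst-case over $s$ in the support) bound, and one must avoid accidentally inflating the bound to a $\TVmax$-style quantity — the whole point is to keep it as an average over $d_\pi$. One clean route is to write the remainder as $\frac{1}{1-\gamma}\langle d_{\tilde\pi}-d_\pi,\ g\rangle$ with $g(s)\triangleq\sum_a\tilde\pi(a|s)A_\pi(s,a)$, use $|\langle d_{\tilde\pi}-d_\pi, g\rangle|\le \|d_{\tilde\pi}-d_\pi\|_1\,\|g\|_\infty$, and absorb $\|g\|_\infty\le 2\xi\cdot\TVmax$ — but that reintroduces $\TVmax$. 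To get the stated average-form bound instead, I expect the cleaner argument recursively unrolls $d_{\tilde\pi}$ in terms of $d_\pi$ and the transition-kernel mismatch, keeping every TV term attached to its own state weighted by $d_\pi$, analogous to the proof of the average-divergence bound in CPO \citep{achiam2017constrained}; verifying that this unrolling yields precisely the coefficient $\frac{\xi\gamma}{1-\gamma}$ and no stray factor of $2$ is the delicate part.
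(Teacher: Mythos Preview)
Your decomposition into surrogate plus remainder and the plan to control $\lVert d_{\tilde\pi}-d_\pi\rVert_1$ via the average ratio deviation are exactly what the paper does. Your steps (ii) and (iv) together yield precisely the paper's bound $\lVert d_{\tilde\pi}-d_\pi\rVert_1 \le \frac{\gamma}{1-\gamma}\,\mathbb{E}_{(s,a)\sim d_\pi}\bigl|\lambda_{\tilde\pi}-1\bigr|$, which the paper derives through the resolvent identity $d_{\tilde\pi}-d_\pi=\gamma\tilde G\Delta d_\pi$ and then $\lVert\Delta d_\pi\rVert_1\le\mathbb{E}_{(s,a)\sim d_\pi}|\lambda_{\tilde\pi}-1|$.

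The ``main obstacle'' you flag in step~(iii), however, is self-inflicted. You bound $g(s)=\sum_a\tilde\pi(a|s)A_\pi(s,a)$ by $2\xi\,\TV(\pi(\cdot|s),\tilde\pi(\cdot|s))$ in step~(i), and then correctly observe that feeding $\lVert g\rVert_\infty\le 2\xi\,\TVmax$ into H\"older reintroduces a max-TV factor. The paper sidesteps this entirely by using the cruder but \emph{state-independent} bound
\[
\lVert g\rVert_\infty \;=\; \max_s\Bigl|\sum_a\tilde\pi(a|s)A_\pi(s,a)\Bigr| \;\le\; \max_s\sum_a\tilde\pi(a|s)\,|A_\pi(s,a)| \;\le\; \xi,
\]
which does not involve any divergence at all. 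With this, the H\"older step $|\langle d_{\tilde\pi}-d_\pi,\,g\rangle|\le\lVert d_{\tilde\pi}-d_\pi\rVert_1\,\lVert g\rVert_\infty$ immediately gives $\frac{\xi\gamma}{1-\gamma}\,\mathbb{E}_{(s,a)\sim d_\pi}|\lambda_{\tilde\pi}-1|$ with the exact constant and no stray factor of~$2$. No recursive unrolling is needed; your sharper pointwise bound on $g$ was working against you. (Your alternative route via the CPO unrolling would also land on the same constant, since CPO's $\epsilon_{\tilde\pi}$ is precisely $\lVert g\rVert_\infty$ and is then bounded by $\xi$ anyway---but it is more work for the same result.)
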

The proof is given in Appendix~\ref{app:ratio-reg-guarantee}. 
This theorem implies that the performance gap between any two policies 
can be effectively bounded by the expected absolute ratio deviations
$\mathbb{E}_{(s, a)\sim d_{\pi}}\left|\frac{\tilde{\pi}(a|s)}{\pi(a|s)} - 1 \right|$. 

It differs from Theorem~\ref{theo:trpo-theorem} in two respects. 
First, it indicates that the lower bound can be 
effectively achieved by averaging the absolute ratio deviations, 
i.e., absolute ratio deviation in expectation. 
Compared to the maximum TV divergence adopted in Theorem~\ref{theo:trpo-theorem}, 
this absolute ratio deviation in expectation can be more feasibly estimated in practice:
one just needs to sample a batch of samples and compute the average absolute deviation. 
Second, this theorem pivots on the ratio $\frac{\tilde{\pi}(a|s)}{\pi(a|s)}$ 
and one can therefore formulate a ratio regularized policy optimization, 
just like KL-regularized PPO. 
For completeness, we also present this type of regularization formulation and its empirical performance in Appendix~\ref{app:r2po}.

\subsection{Early Stopping Policy Optimization (ESPO)}
Similar to PPO, we are interested in optimizing a policy through multiple epochs by reusing samples. 
In practice, if we use the regularization coefficient $C$ recommended by Theorem~\ref{theo:ratio-reg-guarantee}, 
the step sizes for policy optimization would be too small. 
To mitigate this, we adopt the same strategy as in TRPO~\citep{schulman2015trust} to use
a constraint on the ratio deviations:
\begin{align}\label{equ:espo-formulation}
\max_{\tilde{\pi}} \quad & \mathbb{E}_{(s, a)\sim d_\pi}\Big[ \frac{\tilde{\pi}(a|s)}{\pi(a|s)} A_{\pi}(s, a) \Big], \nonumber \\
\text{s.t.} \quad & \mathbb{E}_{(s, a)\sim d_{\pi} }\left|\frac{\tilde{\pi}(a|s)}{\pi(a|s)} - 1 \right| \leq \delta. 
\end{align}
As a result, we can consider optimizing the surrogate objective 
with multiple optimization epochs without any ratio clipping
and meanwhile monitor the expected absolute ratio deviations. 
In particular, after each epoch of optimizing the surrogate objective, 
we estimate the ratio deviations and, 
if they exceed a threshold, we early stop from the optimization epochs. 
We call the resulting algorithm Early Stop Policy Optimization (ESPO), 
as detailed in Algorithm~\ref{algo:ESPO}. 

\begin{algorithm}
    \caption{Early Stopping Policy Optimization (ESPO)}
    \label{algo:ESPO}
    \begin{algorithmic}
        \FOR{iterations $i=1, 2,...$}
            \FOR{actor $=1, 2,...,N$}
                \STATE Run policy $\pi$ in environment
                \STATE Compute advantage estimates $\hat{A}_{\pi}$
            \ENDFOR
            \FOR{epoch $=1, 2,...,K$}
                \STATE Sample $M$ samples $\{(s, a)\}$ from previous rollouts.
                \STATE Compute $\mathcal{L}(\theta) \triangleq - \frac{1}{M}\sum_{s, a}\frac{\tilde{\pi}_{\theta}(a|s)}{\pi(a|s)} \hat{A}_{\pi}(s, a)$.
                \STATE Optimize loss $\mathcal{L}(\theta)$ w.r.t $\theta$. 
                \STATE Estimate deviations $\Delta \triangleq \frac{1}{M}\sum_{s, a}\left|\frac{\tilde{\pi}_{\theta}(a|s)}{\pi(a|s)} - 1 \right|$. 
                \IF{$\Delta > \delta $}
                \STATE Break and early stop from optimization epochs. 
                \ENDIF
            \ENDFOR
            \STATE $\pi \leftarrow \tilde{\pi}_{\theta}$. 
        \ENDFOR
    \end{algorithmic}
\end{algorithm}

ESPO uses the surrogate objective in RC-PPO without any ratio clipping. 
The major difference between these two algorithms is that 
ESPO uses an indicator function to estimate the proper time to drop out from the multiple optimization epochs
while PPO considers a fixed number of optimization epochs. 
Moreover, unlike RC-PPO, which requires tuning the clipping range and the right number of optimization epochs,
ESPO only needs one hyperparameter, the stopping threshold $\delta$. 
Theorem~\ref{theo:ratio-reg-guarantee} suggests that $\delta$ can be heuristically set to a fixed value 
as the coefficient $C$ can be a constant if the advantage is normalized before the iteration begins.
In the experiment section, we show that this hyperparameter can be tuned in one domain 
and generalizes well to several other domains.

Furthermore, one can use  Pinsker's and Jensen's inequalities to derive an upper bound for the ratio deviations. 
Specifically, according to Corollary 3 from~\citet{achiam2017constrained}, 
we can have the following corollary.
\begin{corollary}\label{coro:rd-kl-bound}
    We have the following bound:
    \begin{equation*}
        \Big[\mathbb{E}_{(s, a)\sim d_{\pi}}\left| \frac{\tilde{\pi}(a|s)}{\pi(a|s)} -1  \right| \Big]^2 \leq 2 \mathbb{E}_{(s,a)\sim d_{\pi}}\log\big[\frac{\pi(a|s)}{\tilde{\pi}(a|s)} \big].
    \end{equation*}
    The equality holds if and only if $\tilde{\pi}(a|s) = \pi(a|s)$ for all $(s, a)\in\mathcal{S}\times\mathcal{A}$.
\end{corollary}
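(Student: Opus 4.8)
The plan is to reduce the statement to a state‑wise application of Pinsker's inequality together with one use of Jensen's inequality. First I would rewrite the inner expectation over actions in terms of the total variation divergence: for a fixed state $s$,
\[
\mathbb{E}_{a\sim\pi(\cdot|s)}\left|\frac{\tilde{\pi}(a|s)}{\pi(a|s)}-1\right|
=\sum_{a}\big|\tilde{\pi}(a|s)-\pi(a|s)\big|
=2\,\TV\big(\pi(\cdot|s),\tilde{\pi}(\cdot|s)\big).
\]
Hence the left‑hand side of the corollary equals $\big(\mathbb{E}_{s\sim d_{\pi}}\big[2\,\TV(\pi(\cdot|s),\tilde{\pi}(\cdot|s))\big]\big)^2$, i.e.\ the problem becomes a purely distributional statement about the per‑state TV divergence under $s\sim d_{\pi}$.

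Next I would apply Jensen's inequality to the convex map $t\mapsto t^{2}$ to pull the outer square inside the expectation over $s$,
\[
\Big(\mathbb{E}_{s\sim d_{\pi}}\big[2\,\TV(\pi(\cdot|s),\tilde{\pi}(\cdot|s))\big]\Big)^{2}
\le \mathbb{E}_{s\sim d_{\pi}}\big[4\,\TV(\pi(\cdot|s),\tilde{\pi}(\cdot|s))^{2}\big],
\]
and then invoke Pinsker's inequality state by state in the form $\TV(p,q)^{2}\le \tfrac12 \KL(p,q)$ with $p=\pi(\cdot|s)$, $q=\tilde{\pi}(\cdot|s)$ (the same inequality already quoted in the Background section), giving $4\,\TV(\pi(\cdot|s),\tilde{\pi}(\cdot|s))^{2}\le 2\,\KL(\pi(\cdot|s),\tilde{\pi}(\cdot|s))$. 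Taking expectations over $s\sim d_{\pi}$ and using $\mathbb{E}_{s\sim d_{\pi}}\KL(\pi(\cdot|s),\tilde{\pi}(\cdot|s))=\mathbb{E}_{(s,a)\sim d_{\pi}}\log\frac{\pi(a|s)}{\tilde{\pi}(a|s)}$ yields the claimed bound. Equivalently, one may simply invoke Corollary~3 of \citet{achiam2017constrained} directly; the above is the self‑contained derivation.

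Finally, for the equality characterization I would trace back through the two estimates. Equality in Jensen forces $\TV(\pi(\cdot|s),\tilde{\pi}(\cdot|s))$ to be constant for $d_{\pi}$‑almost every $s$, and equality in Pinsker's inequality (with constant $1/2$) forces $\pi(\cdot|s)=\tilde{\pi}(\cdot|s)$ for $d_{\pi}$‑almost every $s$; since $d_{\pi}(s)>0$ on every reachable state, this gives $\tilde{\pi}(a|s)=\pi(a|s)$ for all $(s,a)$, at which point both sides vanish. The only mildly delicate point is the equality case of Pinsker's inequality: I would either cite the known statement that the bound is strict whenever $p\ne q$, or reduce to two‑point distributions and verify strictness by a short Taylor expansion of $\KL$ around $p=q$. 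Everything else in the argument is routine.
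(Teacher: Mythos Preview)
Your proposal is correct and follows exactly the route the paper indicates: rewrite the ratio deviation as $2\,\TV$, apply Jensen to the square, then Pinsker state by state, which is precisely the ``Pinsker's and Jensen's inequalities'' derivation (equivalently, Corollary~3 of \citet{achiam2017constrained}) that the paper cites without spelling out. Your treatment of the equality case is also sound; the paper itself does not give any further detail beyond the citation.
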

One can thus reformulate~\eqref{equ:espo-formulation} as follows:
\begin{align}\label{equ:kl-es-formulation}
\max_{\tilde{\pi}}& \quad \mathbb{E}_{(s, a)\sim d_{\pi}}\Big[\frac{\tilde{\pi}(a|s)}{\pi(a|s)}A_{\pi}(s, a) \Big], \nonumber \\
\text{s.t.}& \quad \mathbb{E}_{(s, a)\sim d_{\pi}} \Big[ \log\frac{\pi(a | s)}{\tilde{\pi}(a|s)} \Big] \leq \beta, 
\end{align}
where $\beta$ is a hyperparameter set according to $\delta$. 
This is also aligned with the practical formulation of TRPO in~\eqref{equ:trpo-objective}
(or the KL-PPO formulation in~\eqref{equ:kl-ppo-objective}). 
The resulting algorithm is then to instead use $\mathbb{E}_{(s, a)\sim d_{\pi}} \Big[ \log\frac{\pi(a | s)}{\tilde{\pi}(a|s)} \Big]$
to decide the early stopping.
We call this mechanism of early stopping in~\eqref{equ:kl-es-formulation} KL Early Stopping (KL-ES) 
as it relates to the KL divergence,  
and the preceding one in~\eqref{equ:espo-formulation} Ratio Deviation (RD) Early Stopping (RD-ES). 

Furthermore, Corollary~\ref{coro:rd-kl-bound} implies that 
for any $\delta$ for KL-ES, one can always find a corresponding $\delta$ for RD-ES, 
and RD can always be well bounded by KL. 
In the experiment section, we show that RD-ES performs better than KL-ES in many domains and 
is less sensitive to the threshold values. 

\begin{figure*}
    \centering
    \includegraphics[width=\linewidth]{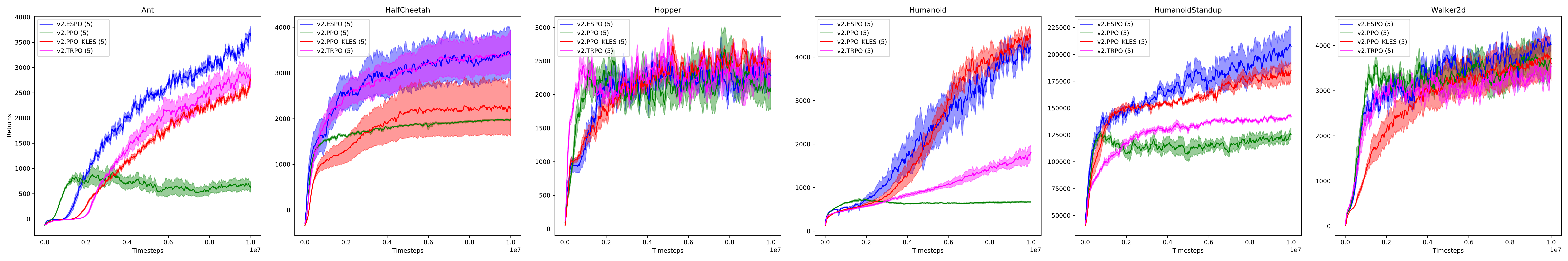}
    \caption{ESPO performance against ratio clipping PPO, ratio clipping PPO with KL to early stop (PPO-KLES), and TRPO on Mujoc benchmark tasks. }
    \label{fig:mujoco-comparison}
\end{figure*}

\subsection{Distributed ESPO}
RC-PPO is also known for its ability to scale up in distributed settings~\citep{berner2019dota,ye2020mastering}. 
In this section, we show that ESPO can also be easily scaled up. 
Similar to Distributed PPO in~\citep{heess2017emergence}, 
we distribute the data collection and gradient computation over workers. 
During training, gradients are first averaged across all workers 
and then synchronised to each worker. 
We adopt this distributed training paradigm as it is reported to yield strong empirical results in practice~\citep{heess2017emergence}.
Likewise, we compute the ratio deviations for each worker 
and use the \emph{minimum} value of ratio deviations across all workers, i.e., \emph{reduce-min}, 
to decide whether to early stop from the optimization epochs. 
Such early stopping is synchronised over all workers.  
We have experimented with other ways of reducing ratio deviations across all workers 
and have found that synchronously reducing to the minimum leads to better results in practice. 
Refer to Algorithm~\ref{algo:despo} in Appendix~\ref{app:distributed-espo}.

\section{Experiments}

For our experiments we first compare ESPO with RC-PPO and other baselines across a wide range of tasks. 
Specifically, we first consider continuous control Mujoco benchmarks~\citep{brockman2016openai}, 
and then scale up to higher dimension continuous control in the DeepMind Control Suite~\citep{tassa2018deepmind}. 
Table~\ref{tab:control-complexity} in Appendix~\ref{app:implementations} gives an overview of the control complexity in terms of state-action dimensions. 
In all tasks we use a Gaussian distribution for the policy whose mean and covariance are parameterized by a neural network 
(see Appendix~\ref{app:implementations} Table~\ref{tab:hyperparameters} for more details). 
Second, we scale up the training to many workers 
and compare the performance of Distributed ESPO with Distributed RC-PPO. 
We set the stopping threshold in ESPO to $0.25$ 
as we found it generalizes well to all control tasks. 
Following~\citet{schulman2017proximal,heess2017emergence}, 
we also normalize observations, rewards and advantages for ESPO and Distributed ESPO. 
See Figure~\ref{fig:ablate-norm-full} in Appendix~\ref{app:more-results} for the ablations. 
The implementation details can also be found in Appendix~\ref{app:implementations}. 
All the empirical results are reported by the mean and standard deviation, 
with 5 random repeated runs (as indicated in the figure legend).

\subsection{High dimensional control tasks}

\begin{figure}
    \centering
    \includegraphics[width=0.9\linewidth]{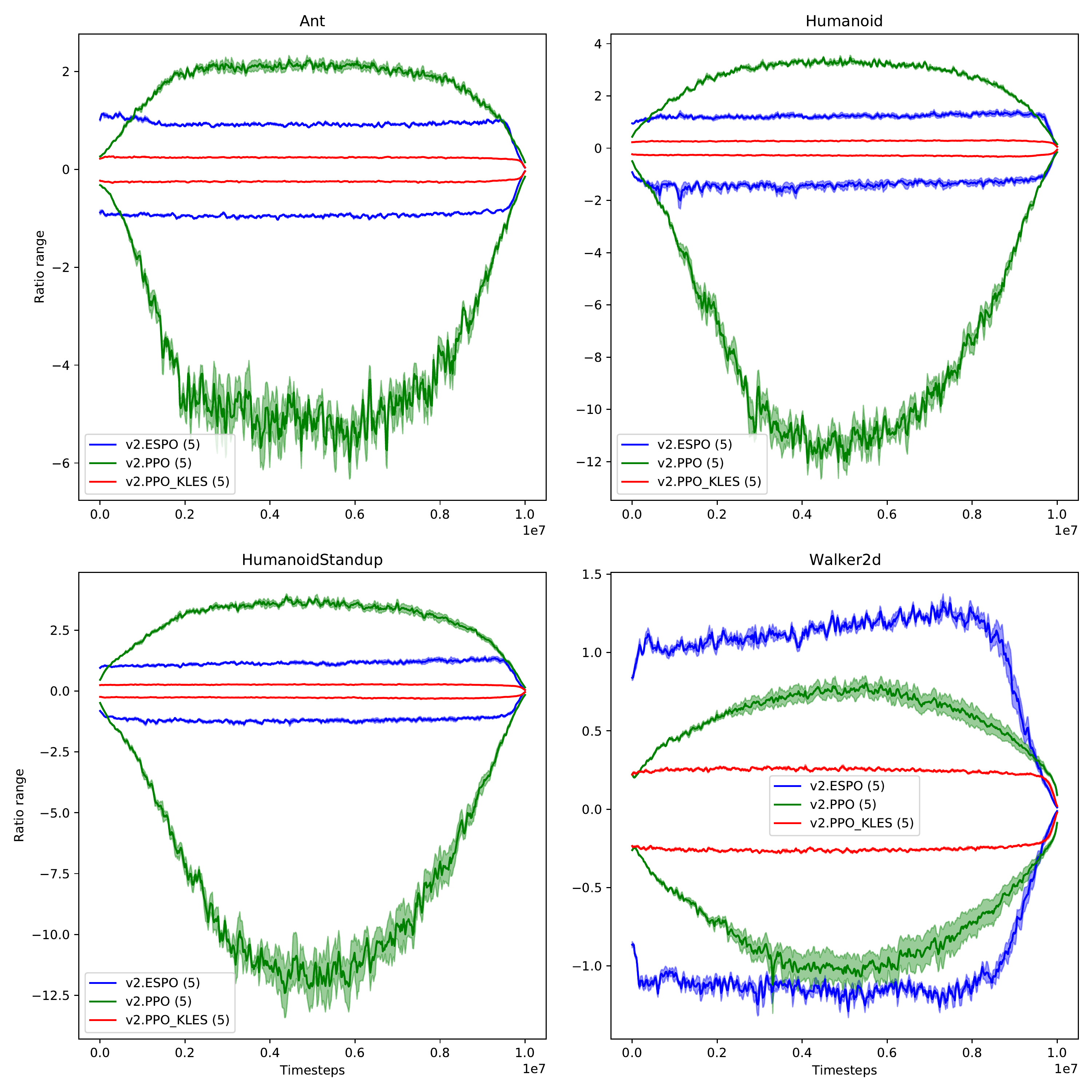}
    \caption{Ratio ranges for ratio clipping PPO, PPO-KLES and ESPO on four Mujoco tasks.}
    \label{fig:mujoco-ratios}
\end{figure}

\begin{figure*}
    \centering
    \includegraphics[width=0.7\linewidth]{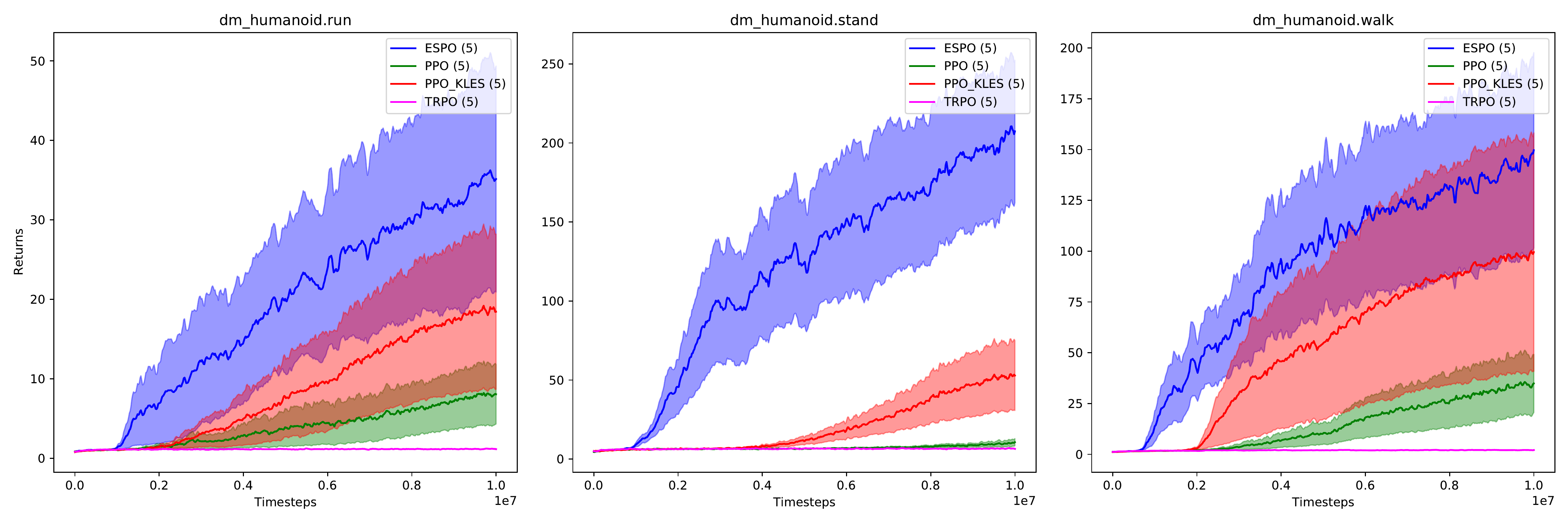}
    \caption{ESPO performance against ratio clipping PPO, PPO-KLES and TRPO on high dimensional DeepMind control tasks (i.e., humanoid run, stand and walk).}
    \label{fig:dmc-comparison}
\end{figure*}

\begin{figure*}
    \centering
    \includegraphics[width=\linewidth]{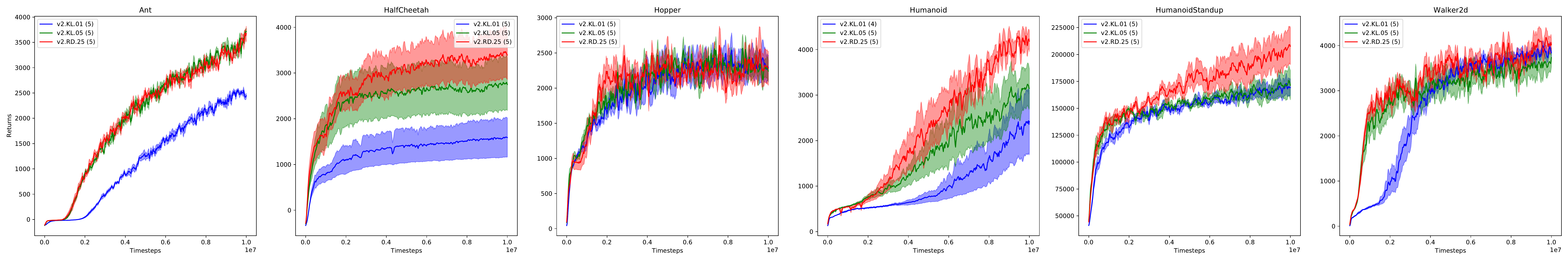}
    \caption{Different ways to early stop from the optimization epochs: 
        \emph{KL.05} and \emph{KL.10} correspond to using the KL estimate as the early stopping condition 
        (with $\beta$ equals $0.05$ and $0.10$ respectively as in~\eqref{equ:kl-es-formulation});
        \emph{RD.30} refers to using the ratio deviations as the early stopping condition with $\delta=0.30$ as in~\eqref{equ:espo-formulation}.}
    \label{fig:ablate-rd-vs-kl}
\end{figure*}

We first compare ESPO with RC-PPO and TRPO on high dimensional control tasks. 
For RC-PPO, we use the default hyperparameters used by~\citet{schulman2017proximal}. 
We also sweep over many different values for the clipping range 
and pick the one that performs the best as the baseline. 
Moreover, we train RC-PPO  with proper learning rate annealing 
(as it would fail to optimize policies with a fixed learning rate; 
more results can be found in Appendix~\ref{app:more-results}.)
TRPO was also trained with default parameters in~\citet{schulman2015trust}.
Furthermore, we noticed that the OpenAI spinning up implementation of RC-PPO heuristically uses the KL estimate to early stop the optimization epochs, 
as presented in~\eqref{equ:kl-es-formulation}. 
We also leverage this version of RC-PPO as an important baseline, i.e., PPO-KLES 
and adopt the best reported threshold value, $0.01$, for this KL-ES. 
The performance comparison is presented in Figure~\ref{fig:mujoco-comparison}.

ESPO outperforms PPO and TRPO by a largin margin on Ant, Humanoid, HumanoidStandup and Walker2d. 
In these three domains, the performance of PPO plateaus quickly after 1 million timesteps 
while the performance of ESPO is still monotonically increasing. 
Moreover, comparing the training performance of PPO-KLES and PPO, 
shows that early stopping of the optimization epochs via an empirical KL estimate 
has a big impact on PPO performance. 
In particular, the KL early stopping in PPO significantly boosts the performance of 
the vanila ratio clipping PPO on Ant, HalfCheetah, Humanoid and HumanoidStandup,
which suggests that the early stopping from the optimization epochs can be crucial. 
Also, ESPO delivers more consistent good performs across all Mujoco domains than PPO-KLES. 
Figure~\ref{fig:mujoco-ratios} presents the ratio range for ESPO, PPO and PPO-KLES in the training. 
The ratios in ESPO have a similar evolving pattern as that in TRPO 
while those in PPO can grow unbounded. 
See Figure~\ref{fig:mujoco-ratios-full} in Appendix~\ref{app:more-results} for a full report. 
Notably, the ratios in PPO are not well bounded in Ant, Humanoid and HumanoidStandup, 
which is consistent with its poor performance on these tasks. 
With early stopping, either via ratio deviations as used in ESPO or empirical KL as used in PPO-KLES, 
the ratios in all four tasks are well bounded. 
Moreover, we also report the empirical estimate of ratio deviations in Figure~\ref{fig:mujoco-empirical-tv} in Appendix~\ref{app:more-results}. 
Overall, compared to the RC-PPO, 
introducing early stopping in both ESPO and PPO-KLES significantly reduces the empirical ratio deviations, 
which is well aligned with the formulation in~\eqref{equ:espo-formulation} and~\eqref{equ:kl-es-formulation}.

We further compare ESPO with RC-PPO, PPO-KLES and TRPO on more complex control domains: 
 the humanoid tasks in the DeepMind Control (DMC) Suite~\citep{tassa2018deepmind}, 
including Humanoid run, stand, and walk.
The empirical results are presented in Figure~\ref{fig:dmc-comparison}. 
ESPO significantly outperforms all other baselines in all those highly complex domains.
The results also show that, 
unlike all other multi-epoch policy optimization methods, 
TRPO completely fails to effectively optimize policies. 
The empirical ratio ranges and the estimate of ratio deviations
are also presented in Figure~\ref{fig:dm-ratios-full} and Figure~\ref{fig:dm-tv-full} repsectively 
in Appendix~\ref{app:more-results}. 
Compared to PPO and PPO-KLES, 
ESPO is more effective in bounding the ratio deviation in all three sub-tasks 
and maintaining it at an almost constant level, 
which could be more benificial for the policy to take large update steps
without breaking the monotonic improvement guarantee.

\subsection{Comparing different ways of early stopping}

We compare the two different ways of early stopping, KL-ES and RD-ES. 
Specifically, we perform a hyperparameter sweep for the thresholding value used in KL-ES, 
and find that thresholding value $0.05$ generally performs the best, which we denote KL.05.
We also compute another threshold value $0.01$ for KL-ES 
according to the SpiningUp implementation, 
which produces the best performance with RC-PPO, 
and denote this baseline KL.01. 
For RD-ES, we use the threshold value $0.25$ as it generally works well across all Mujoco domains, 
and denote it RD.25.
The results are presented in Figure~\ref{fig:ablate-rd-vs-kl}. 
RD-ES consistently outperforms the two variants of KL-ES, 
and the performance of RD.25 is more robust than its KL counterpart. 
This could be because the RD estimate for early stopping 
is always tighter than that of KL divergence 
since the ratio deviation is upper bounded by the KL divergence based on Corollary~\ref{coro:rd-kl-bound}. 
The ratio range and the empirical estimate of expected ratio deviations are also given in Figure~\ref{fig:rd-vs-kl-ratios} and Figure~\ref{fig:rd-vs-kl-tv} in Appendix~\ref{app:more-results}. 
Interestingly, although the ratios are bounded very similarly in RD.25 and KL.05, 
their corresponding expected ratio deviations differ greatly:
the ratio deviation in KL.05 is much smaller than that in RD.25. 
Since the ratio deviation describes the policy updates at each state-action samples, 
i.e., how much the updated policy depart from the previous policy at each empirical sample, 
this behavior difference between KL.05 and RD.25 possibly implies that 
RD.25 and KL.05 updates the policy differently. 
Specifically, RD.25 optimizes the policy to be more uniformly deviated from the old policy than KL.05, 
which could make the policy optimization more effectively in practice.

\begin{figure}
    \centering
    \includegraphics[width=1.0\linewidth]{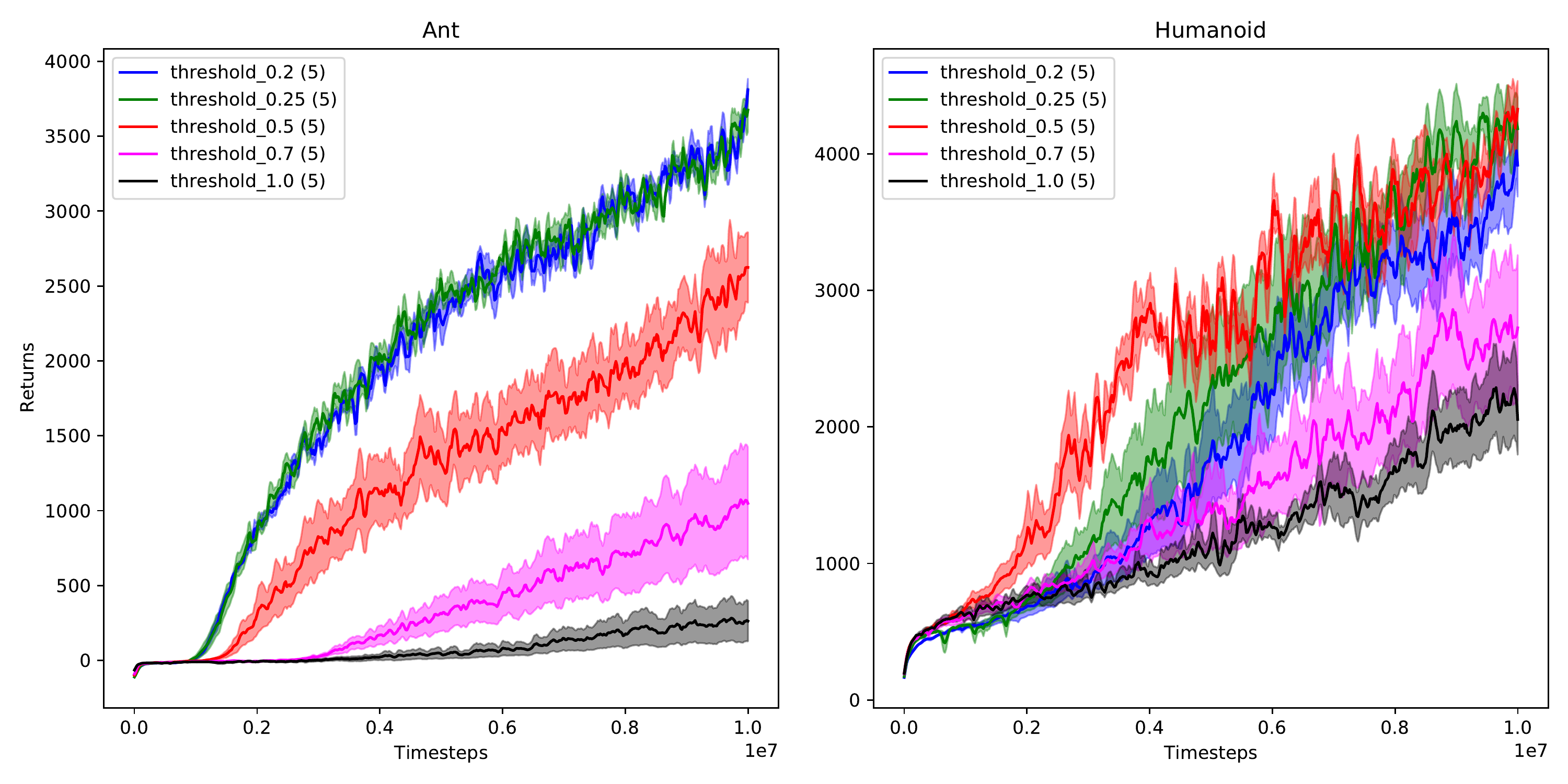}
    \caption{Sensitivity analysis on the threshold values, i.e., when to stop the multiple optimization epochs.}
    \label{fig:ablate-threshold}
\end{figure}

In ESPO, one important hyperparameter is the stopping threshold, 
which indicates when to stop the multiple optimization epochs.
We perform a sensitivity analysis on different values for this threshold value, 
as presented in Figure~\ref{fig:ablate-threshold}.
The threshold value is crucial for ESPO, 
which is consistent with Theorem~\ref{theo:ratio-reg-guarantee} 
as it trades off between a tight lower bound and sample efficiency. 
However, ESPO with threshold $0.25$ performs generally well on different domains. 
See 
Appendix~\ref{app:more-results} for the full ablation results.

\subsection{Comparing to distributed PPO with many workers}
PPO is also known for its ability to scale up in distributed settings~\citep{schulman2017proximal,heess2017emergence,berner2019dota,ye2020mastering}.
We thus also compare ESPO to Distributed PPO (DPPO)~\citep{heess2017emergence}. 
Specifically, we train ESPO in a single process for 50 million timesteps 
and compare this single-process ESPO, i.e., ESPO-1-worker, with PPO trained with multiple processes. 
Similar to~\citet{heess2017emergence}, we run 10 workers in parallel, i.e., PPO-10-workers, 
each of which is assigned one actor to collect many timesteps of data 
and one learner to compute the gradient and update policy parameters~\citep{espeholt2018impala,ye2020mastering}. 
The gradients of the PPO learners are averaged through the Message Passing Interface (MPI) all-reduce algorithm~\citep{sergeev2018horovod}. 
We use the more complicated Mujoco tasks, 
including Ant, Humanoid, HumanoidStandup and Walker2d. 
We also scale up the training of ESPO in the same manner according to Algorithm~\ref{algo:despo}, 
and denote its performance as ESPO-10-workers.
The training performance of ESPO and DPPO is presented in Figure~\ref{fig:espo-vs-dppo}. 
We also report the single process PPO, PP0-1-worker, for reference.
Clearly, scaling up the training via multiple workers 
significantly improves PPO's performance: 
PPO-10-workers significantly outperforms its single process version,
on Ant, Humanoid, and HumanoidStandup. 
Hence, large-scale training may explain the success of PPO in many game applications~\citep{berner2019dota,ye2020mastering},
Also, with the same number of samples, 
ESPO still outperforms DPPO on Humanoid and HumanoidStandup, 
even performs better than its scaled-up variant, ESPO-10-workers.
On Ant domain, the performance of ESPO is worse than that of DPPO. 
But the distributed ESPO, i.e., ESPO-10-worker, performs rather similarly as DPPO
It could be that, on this specific domain, 
averaging the gradient across many workers reduces the variance 
and makes policy optimization more effective in each iteration.

\section{Conclusion}
We investigated Proximal Policy Optimization (PPO) methods 
and considered one of its most important variants, ratio clipping PPO. 
One distinctive feature of this method is performing 
multiple optimization epochs of a surrogate objective with one set of sampled data. 
We showed that ratio clipping is not needed:
one can directly optimize the original surrogate objective for multiple epochs without any clipping. The key is to early stop such optimization epochs at the right time. 
We presented theoretical analysis to shed light on how to decide this early stopping time
and proposed a novel algorithm, Early Stopping Policy Optimization (ESPO). 
We compared ESPO with PPO across many continuous control tasks 
and showed that ESPO is significantly better than PPO in those benchmark tasks. 
We also showed that ESPO can be easily adapted to distributed training with many workers, 
and still delivers strong empirical performance, 
comparable to or better than the PPO counterpart. 

\bibliography{bib}
\bibliographystyle{icml2022}

\newpage
\appendix
\onecolumn
\section{Theorem proofs}

\subsection{Proof of Proposition~\ref{prop:ratio-tv-bound}}\label{app:ratio-tv-bound}
\begin{proposition}\label{prop:tv-max-identity}
$\TVmax(\pi, \tilde{\pi}) = \max_{s\in\mathcal{S}} \displaystyle\sum_{\tilde{\pi}(a|s)\geq\pi(a|s)}\big[ \tilde{\pi}(a|s) - \pi(a|s) \big] $. 
\end{proposition}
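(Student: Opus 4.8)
The plan is to reduce the claim to the standard fact that, for two probability distributions on a finite set, the total variation distance equals the total positive part of their difference. First I would fix an arbitrary state $s\in\mathcal{S}$ and abbreviate $p(a)\triangleq\pi(a|s)$ and $q(a)\triangleq\tilde{\pi}(a|s)$, which are probability distributions on the finite action set $\mathcal{A}$. Partition $\mathcal{A}$ into $\mathcal{A}_+\triangleq\{a : q(a)\geq p(a)\}$ and $\mathcal{A}_-\triangleq\{a : q(a) < p(a)\}$.

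The key step is the observation that, since $\sum_{a} p(a) = \sum_{a} q(a) = 1$, we have $\sum_{a}\big(q(a)-p(a)\big) = 0$, and therefore $\sum_{a\in\mathcal{A}_+}\big(q(a)-p(a)\big) = \sum_{a\in\mathcal{A}_-}\big(p(a)-q(a)\big)$; that is, the total positive deviation equals the total negative deviation. Consequently $\sum_{a}\lvert q(a)-p(a)\rvert = 2\sum_{a\in\mathcal{A}_+}\big(q(a)-p(a)\big)$.

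Then, depending on the adopted definition of $\TV$: if $\TV(p,q)\triangleq\tfrac12\sum_{a}\lvert q(a)-p(a)\rvert$, the identity above immediately gives $\TV(p,q) = \sum_{a\in\mathcal{A}_+}\big(q(a)-p(a)\big)$; if instead $\TV(p,q)\triangleq\sup_{B\subseteq\mathcal{A}}\lvert q(B)-p(B)\rvert$, one argues that the supremum is attained at $B=\mathcal{A}_+$, since including any element of $\mathcal{A}_-$ in $B$ only decreases $q(B)-p(B)$ and excluding any element of $\mathcal{A}_+$ does the same, while the symmetric choice $B=\mathcal{A}_-$ gives the same value by the previous paragraph. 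Either way, $\TV\big(\pi(\cdot|s),\tilde{\pi}(\cdot|s)\big) = \sum_{\tilde{\pi}(a|s)\geq\pi(a|s)}\big[\tilde{\pi}(a|s)-\pi(a|s)\big]$. Taking the maximum over $s\in\mathcal{S}$ and invoking the definition $\TVmax(\pi,\tilde{\pi})\triangleq\max_{s}\TV\big(\pi(\cdot|s),\tilde{\pi}(\cdot|s)\big)$ yields the stated identity. The actions with $q(a)=p(a)$ contribute zero, so whether the index set is written with $\geq$ or $>$ is immaterial.

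The main obstacle is really a minor bookkeeping one: cleanly covering both standard conventions for $\TV$ and the boundary case $q(a)=p(a)$. There is no substantive difficulty beyond the mass-conservation argument in the second paragraph.
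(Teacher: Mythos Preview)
Your proof is correct and follows the same approach as the paper, which simply cites the standard identity $\TV(\mu,\nu)=\sum_{\mu(x)>\nu(x)}[\mu(x)-\nu(x)]$ without proving it. You supply the mass-conservation argument the paper omits, and your handling of both conventions for $\TV$ and of the boundary case $q(a)=p(a)$ is more careful than the paper's one-line justification.
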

This useful identity follows from a property of $\TV$: 
$\TV(\mu(x), \nu(x))=\sum_{\mu(x)>\nu(x)}[\mu(x)-\nu(x)]$ 
where $\mu$ and $\nu$ are two distributions. 
This proposition indicates that a decentralized trust region is also defined by
the sum of probability differences over a special subset. 
We use this to upper bound the trust region in the following analysis.

\begin{assumption}
Assume the advantage function $A_{\pi}(s, a)$ converges to a fixed point.
\end{assumption}

\begin{proposition}
If $\lambda(s, a) \triangleq \frac{\tilde{\pi}(a|s)}{\pi(a|s)}$ is bounded by $M_1$ and $M_2$, 
i.e., $M_1 \leq \lambda(s, a) \leq M_2$ for $\forall (s, a)$ 
where $M_1\in(0, 1)$ and $M_2\in(1,\infty)$, 
then the following bound holds:
$\TVmax\big(\pi(\cdot|s), \tilde{\pi}(\cdot|s)\big) \leq \min(\frac{1}{M_1} -1 , M_2 -1)$. 
\end{proposition}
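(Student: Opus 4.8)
The plan is to fix an arbitrary state $s$ and bound $\TV\big(\pi(\cdot|s),\tilde\pi(\cdot|s)\big)$ by each of the two quantities $M_2-1$ and $\frac1{M_1}-1$ separately, then take the minimum and finally the maximum over $s$. The starting point is Proposition~\ref{prop:tv-max-identity}, which lets me write the TV at state $s$ as the total positive mass $\sum_{a\in S^+}[\tilde\pi(a|s)-\pi(a|s)]$ where $S^+=\{a:\tilde\pi(a|s)\ge\pi(a|s)\}$; equivalently, since both $\pi(\cdot|s)$ and $\tilde\pi(\cdot|s)$ sum to one, this also equals the total negative mass $\sum_{a\in S^-}[\pi(a|s)-\tilde\pi(a|s)]$ with $S^-=\{a:\pi(a|s)>\tilde\pi(a|s)\}$.

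For the $M_2-1$ bound I would work with the positive part: for $a\in S^+$ write $\tilde\pi(a|s)-\pi(a|s)=(\lambda(s,a)-1)\pi(a|s)\le(M_2-1)\pi(a|s)$, then sum over $a\in S^+$ and use $\sum_{a\in S^+}\pi(a|s)\le1$ to conclude $\TV\le M_2-1$. For the $\frac1{M_1}-1$ bound I would instead work with the negative part and normalise by $\tilde\pi$ rather than $\pi$: for $a\in S^-$, $\pi(a|s)-\tilde\pi(a|s)=\big(\tfrac1{\lambda(s,a)}-1\big)\tilde\pi(a|s)\le\big(\tfrac1{M_1}-1\big)\tilde\pi(a|s)$ since $\lambda(s,a)\ge M_1$; summing over $a\in S^-$ with $\sum_{a\in S^-}\tilde\pi(a|s)\le1$ gives $\TV\le\frac1{M_1}-1$.

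Combining the two estimates gives $\TV\big(\pi(\cdot|s),\tilde\pi(\cdot|s)\big)\le\min\big(\frac1{M_1}-1,\,M_2-1\big)$ for every $s$, and taking the maximum over $s$ yields the stated bound on $\TVmax$. The one subtlety — and essentially the only place where care is required — is choosing the correct representation of the TV (positive versus negative part) together with the correct denominator ($\pi$ versus $\tilde\pi$) in each of the two estimates, so that the collection of probabilities being summed is a sub-collection of a genuine probability distribution and hence sums to at most one; once that is arranged, each bound reduces to a one-line algebraic manipulation of the ratio $\lambda$.
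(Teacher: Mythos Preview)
Your proof is correct and follows essentially the same approach as the paper: both split the TV at each state into its positive and negative parts, bound the positive part via $\lambda\le M_2$ with $\pi$ as the reference distribution, and bound the negative part via $\lambda\ge M_1$ with $\tilde\pi$ as the reference distribution. Your version is in fact slightly cleaner: the paper invokes an extra assumption identifying $\{a:\tilde\pi(a|s)>\pi(a|s)\}$ with $\{a:A_\pi(s,a)>0\}$ in the actor--critic setting, whereas you correctly partition directly by the sign of $\tilde\pi-\pi$, which is all the argument actually requires.
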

\begin{proof}
This proposition comes from that fact that 
updating $\tilde{\pi}(s, a)$ with respect to a converged $A(s, a)$
leads to $\tilde{\pi}(a|s)>\pi(a|s)$ if $ A(s, a)>0,~\forall s, a$ in actor-critic algorithms~\citep{konda2000actor}. 
Thus, based on Proposition~\ref{prop:tv-max-identity}, we have the following
\begin{equation*}
\TVmax(\pi, \tilde{\pi}) = \max_{s}\sum_{\substack{a\in\mathcal{A}\\ A(s,a)>0}}[\tilde{\pi}(a|s) - \pi(a|s)]
\leq \max_{s}\sum_{\substack{a\in\mathcal{A} \\A(s,a)>0}}[M_2 \pi(a|s) - \pi(a|s)]
\leq M_2 - 1. 
\end{equation*}
The equation is from Proposition~\ref{prop:tv-max-identity} by considering $A(s, a)>0$
such that $\tilde{\pi}(a|s)>\pi(a|s)$. 
The first inequality is a result of bounded ratios 
and the second is from $\sum_{a\in\{a: A(s, a)>0\}}[\pi(a|s)]<1$ 
(considering the lower bound yields the same analysis.)
Likewise, we have the following
\begin{equation*}
\TVmax(\pi, \tilde{\pi}) = \max_{s}\sum_{\substack{a\in\mathcal{A}\\ A(s,a)<0}}[\pi(a|s) - \tilde{\pi}(a|s)] 
\leq \max_{s}\sum_{\substack{a\in\mathcal{A} \\A(s,a)<0}}[\frac{1}{M_1}\tilde{\pi}(a|s) - \tilde{\pi}(a|s)]
\leq \frac{1}{M_1} - 1. 
\end{equation*}
Combined, $\TVmax(\pi, \tilde{\pi}) \leq \min(\frac{1}{M_1} - 1, M_2-1)$. 
\end{proof}

\subsection{Proof of Theorem~\ref{theo:ratio-reg-guarantee}}\label{app:ratio-reg-guarantee}
\begin{theorem}
For any policies $\tilde{\pi}$ and $\pi$, 
the following bound holds:
\begin{equation*}
J(\tilde{\pi}) - J(\pi) \geq \frac{1}{1-\gamma}\bigg\{ \mathbb{E}_{(s, a)\sim d_\pi}\Big[ \frac{\tilde{\pi}(a|s)}{\pi(a|s)} A_{\pi}(s, a) \Big]
- C \cdot \mathbb{E}_{(s, a)\sim d_{\pi} }\left|\frac{\tilde{\pi}(a|s)}{\pi(a|s)} - 1 \right| \bigg\}. 
\end{equation*}
where $C=\frac{\xi \gamma}{1-\gamma}$ and $\xi=\max_{s,a}\lvert A_{\pi}(s, a)\rvert$.
\end{theorem}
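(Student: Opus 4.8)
The plan is to split $J(\tilde\pi) - J(\pi)$ into the surrogate term that appears in $L_\pi(\tilde\pi)$ plus a remainder that measures the mismatch between weighting advantages by $d_\pi$ versus $d_{\tilde\pi}$, and then to control that remainder by the shift between the two discounted state distributions.

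First I would combine the exact performance identity~\eqref{equ:performance-identity} with the definition of $L_\pi(\tilde\pi)$ to write
\[
J(\tilde\pi) - J(\pi) = \frac{1}{1-\gamma}\mathbb{E}_{(s,a)\sim d_\pi}\Big[\frac{\tilde\pi(a|s)}{\pi(a|s)}A_\pi(s,a)\Big] + \frac{1}{1-\gamma}\sum_s\big(d_{\tilde\pi}(s)-d_\pi(s)\big)\bar{A}(s),
\]
where $\bar{A}(s)\triangleq\sum_a\tilde\pi(a|s)A_\pi(s,a)$. Since $\sum_a\tilde\pi(a|s)=1$ and $|A_\pi(s,a)|\le\xi$, we have $|\bar{A}(s)|\le\xi$, so the remainder is bounded below by $-\frac{\xi}{1-\gamma}\sum_s|d_{\tilde\pi}(s)-d_\pi(s)|$. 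The theorem then follows once we show
\[
\sum_s|d_{\tilde\pi}(s)-d_\pi(s)| \;\le\; \frac{\gamma}{1-\gamma}\,\mathbb{E}_{(s,a)\sim d_\pi}\Big|\frac{\tilde\pi(a|s)}{\pi(a|s)}-1\Big|,
\]
because substituting this bound and setting $C=\frac{\xi\gamma}{1-\gamma}$ reproduces the stated inequality exactly.

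The hard part is this state-distribution-shift estimate, and I would prove it via the resolvent identity. Writing $d_\pi=(1-\gamma)(I-\gamma P^\pi)^{-1}p_0$ with $P^\pi$ the state-to-state transition matrix induced by $\pi$, the identity $(I-\gamma P^{\tilde\pi})^{-1}-(I-\gamma P^\pi)^{-1}=\gamma(I-\gamma P^{\tilde\pi})^{-1}(P^{\tilde\pi}-P^\pi)(I-\gamma P^\pi)^{-1}$ gives $d_{\tilde\pi}-d_\pi=\gamma(I-\gamma P^{\tilde\pi})^{-1}(P^{\tilde\pi}-P^\pi)d_\pi$. Taking $\ell_1$ norms, $(I-\gamma P^{\tilde\pi})^{-1}=\sum_{t\ge0}\gamma^t(P^{\tilde\pi})^t$ contracts $\ell_1$ mass by at most $\frac{1}{1-\gamma}$ (each $P^{\tilde\pi}$ is column-stochastic), while
\[
\|(P^{\tilde\pi}-P^\pi)d_\pi\|_1 \le \sum_s d_\pi(s)\sum_{s'}\Big|\sum_a\big(\tilde\pi(a|s)-\pi(a|s)\big)P(s'|s,a)\Big| \le \mathbb{E}_{s\sim d_\pi}\sum_a|\tilde\pi(a|s)-\pi(a|s)|,
\]
using that $P(\cdot|s,a)$ has unit $\ell_1$ mass; and $\sum_a|\tilde\pi(a|s)-\pi(a|s)|=\mathbb{E}_{a\sim\pi}\big|\frac{\tilde\pi(a|s)}{\pi(a|s)}-1\big|$. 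Chaining these yields the desired inequality. I expect this resolvent step to be the only real obstacle; an equivalent trajectory-coupling argument (as in the original TRPO analysis) would also work, but this route is the most compact, and the only subtlety is keeping each norm bound tight enough to land precisely the constant $\frac{\xi\gamma}{(1-\gamma)^2}$.
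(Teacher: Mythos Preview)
Your proposal is correct and follows essentially the same route as the paper: decompose $J(\tilde\pi)-J(\pi)$ into the surrogate term plus the correction $\frac{1}{1-\gamma}\sum_s\big(d_{\tilde\pi}(s)-d_\pi(s)\big)\bar A(s)$, bound $|\bar A(s)|\le\xi$ (this is the paper's H\"older step with $p=1,q=\infty$), and control $\|d_{\tilde\pi}-d_\pi\|_1$ via the resolvent identity $d_{\tilde\pi}-d_\pi=\gamma(I-\gamma P^{\tilde\pi})^{-1}(P^{\tilde\pi}-P^\pi)d_\pi$ together with $\|(I-\gamma P^{\tilde\pi})^{-1}\|_1\le\frac{1}{1-\gamma}$ and the importance-weight rewriting $\sum_a|\tilde\pi(a|s)-\pi(a|s)|=\mathbb{E}_{a\sim\pi}\big|\tfrac{\tilde\pi}{\pi}-1\big|$. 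Every step and every constant matches the paper's argument.
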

\begin{proof}
This proof relies on the same strategy used in~\citep{schulman2015trust,achiam2017constrained}. 
According to~\eqref{equ:performance-identity},
for any two policies $\tilde{\pi}$ and $\pi$, we have the following
\begin{align*}
& J(\tilde{\pi}) - J(\pi) = \frac{1}{1-\gamma} \mathbb{E}_{\red{s\sim d_{\tilde{\pi}}}, a\sim\tilde{\pi}}[A_{\pi}(s, a)] \\
 = & \frac{1}{1-\gamma} \Big[\mathbb{E}_{\red{s\sim d_{\tilde{\pi}}}, a\sim\tilde{\pi}}[A_{\pi}(s, a)] - \mathbb{E}_{\blue{s\sim d_{\pi}}, a\sim\tilde{\pi}}[A_{\pi}(s, a)]
 + \mathbb{E}_{\blue{s\sim d_{\pi}}, a\sim\tilde{\pi}}[A_{\pi}(s, a)] \Big] \\
 = & \frac{1}{1-\gamma} \Big[ \underbrace{\mathbb{E}_{\red{s\sim d_{\tilde{\pi}}}, a\sim\tilde{\pi}}[A_{\pi}(s, a)] - \mathbb{E}_{\blue{s\sim d_{\pi}}, a\sim\tilde{\pi}}[A_{\pi}(s, a)]}_{\text{Correction term}} 
 + \underbrace{\mathbb{E}_{\blue{s\sim d_{\pi}}, a\sim\pi}\big[ \frac{\tilde{\pi}(a|s)}{\pi(a|s)} A_{\pi}(s, a) \big]}_{\text{Surrogate term}} \Big]. 
\end{align*}
In the last transition we unify the terms under the same expectation sign by using importance weights. 
Consider the correction term
\begin{equation*}
\mathbb{E}_{\red{s\sim d_{\tilde{\pi}}}, a\sim\tilde{\pi}}[A_{\pi}(s, a)] - \mathbb{E}_{\blue{s\sim d_{\pi}}, a\sim\tilde{\pi}}[A_{\pi}(s, a)]
= \sum_{s, a} \big[\red{d_{\tilde{\pi}}(s)} - \blue{d_{\pi}(s)}\big]\tilde{\pi}(a|s)A_{\pi}(s, a)
= \sum_{s} \big[\red{d_{\tilde{\pi}}(s)} - \blue{d_{\pi}(s)}\big] \sum_{a}\big[ \tilde{\pi}(a|s)A_{\pi}(s, a) \big]. 
\end{equation*}
Using vector notation $\red{d_{\tilde{\pi}}} - \blue{d_{\pi}}$ and $A_{\pi}^{\tilde{\pi}}$, 
we can rewrite the above summation in dot product form:
\begin{equation*}
\mathbb{E}_{\red{s\sim d_{\tilde{\pi}}}, a\sim\tilde{\pi}}[A_{\pi}(s, a)] - \mathbb{E}_{\blue{s\sim d_{\pi}}, a\sim\tilde{\pi}}[A_{\pi}(s, a)]= (\red{d_{\tilde{\pi}}} - \blue{d_{\pi}})\cdot A_{\pi}^{\tilde{\pi}}
\end{equation*}
This term can be bounded by applying Holder's inequality: for any $p, q\in[1, \infty]$, 
such that $\frac{1}{p} + \frac{1}{q} = 1$, 
we have
\begin{equation*}
\norm{(\red{d_{\tilde{\pi}}} - \blue{d_{\pi}})\cdot A_{\pi}^{\tilde{\pi}} }_1 \leq \norm{\red{d_{\tilde{\pi}}} - \blue{d_{\pi}}}_p \norm{A_{\pi}^{\tilde{\pi}}}_q. 
\end{equation*}

We consider the case $p=1$ and $q=\infty$ as in~\citep{schulman2015trust} and ~\citep{achiam2017constrained}, 
and aim at bounding $ \norm{\red{d_{\tilde{\pi}}} - \blue{d_{\pi}}}_1$ and $\norm{A_{\pi}^{\tilde{\pi}}}_\infty$. 

First, we show how to bound $\norm{\red{d_{\tilde{\pi}}} - \blue{d_{\pi}}}_1$.
Let $G=(\mathbf{1} +\gamma P_\pi + (\gamma P_\pi)^2 + ...) = (\mathbf{1}-\gamma P_\pi)^{-1}$ (a direct result of series summation formula), 
and similarly let $\tilde{G}=(\mathbf{1} + \gamma P_{\tilde{\pi}} + (\gamma P_{\tilde{\pi}})^2 + ...) = (\mathbf{1}-\gamma P_{\tilde{\pi}})^{-1}$. 
\begin{equation*}
    G^{-1} - \tilde{G}^{-1} = (\mathbf{1}-\gamma P_{\pi}) - (\mathbf{1}-\gamma P_{\tilde{\pi}}) = \gamma \Delta,
\end{equation*}
where $\Delta\triangleq P_{\tilde{\pi}} - P_{\pi}$. 
Right multiply by $G$ and left multiply by $\tilde{G}$ and rearrange:
$\tilde{G} - G  = \gamma \tilde{G}\Delta G$. 
Thus, 
\begin{equation*}
\red{d_{\tilde{\pi}}} - \blue{d_{\pi}} 
= (1-\gamma)(\tilde{G} - G)p_0 = (1-\gamma) \gamma \tilde{G}\Delta G p_0 
= \gamma \tilde{G}\Delta \blue{d_{\pi}}. 
\end{equation*}
The last transition comes from the definition of $d_\pi$: $d_\pi \triangleq Gp_0$. 
According to the $l_1$ operator norm
$\norm{A}_1 = \sup_{ d} \left\{ \frac{\norm{A d}_1}{\norm{ d}_1} \right\}$, 
we have
\begin{align*}
\norm{\red{d_{\tilde{\pi}}} - \blue{d_{\pi}}}_1 &= \gamma \norm{ \tilde{G}\Delta \blue{d_{\pi}} }_1 \\
&\leq \gamma \norm{ \tilde{G}}_1 \cdot \norm{\Delta \blue{d_{\pi}} }_1 \\
&= \gamma \norm{ (\mathbf{1} + \gamma P_{\tilde{\pi}} + (\gamma P_{\tilde{\pi}})^2 + ...)}_1 \cdot \norm{\Delta \blue{d_{\pi}} }_1 \\
&\leq \gamma (1 + \gamma \norm{P_{\tilde{\pi}}}_1 + \gamma^2 \norm{P_{\tilde{\pi}}^2}_1 + ...) \cdot \norm{\Delta \blue{d_{\pi}} }_1 \\
&= \frac{\gamma}{1-\gamma} \norm{\Delta \blue{d_{\pi}} }_1
\end{align*}
Meanwhile, 
\begin{align*}
\norm{ \Delta \blue{d_{\pi}} }_1 = & \sum_{s^\prime} \left| \sum_{s} \blue{d_{\pi}(s)} \big( p_{\tilde{\pi}}(s^\prime|s) - p_{\pi}(s^\prime|s)\big) \right|  \\
= & \sum_{s^\prime} \left| \sum_{s} \blue{d_{\pi}(s)} \sum_{a} P(s^\prime|s, a) \big( \tilde{\pi}(a|s) - \pi(a|s) \big) \right| \\
\leq & \sum_{s^\prime} \sum_{s} \blue{d_{\pi}(s)} \sum_{a} P(s^\prime|s, a) \left| \tilde{\pi}(a|s) - \pi(a|s) \right| \\
= & \sum_{s} \blue{d_{\pi}(s)} \sum_{a} \left| \tilde{\pi}(a|s) - \pi(a|s) \right| \\
= & \sum_{s,a} \blue{d_{\pi}(s,a)} \left| \frac{\tilde{\pi}(a|s)}{\pi(a|s)}  - 1 \right|
\end{align*}
Thus, $\norm{\red{d_{\tilde{\pi}}} - \blue{d_{\pi}}}_1 \leq \frac{\gamma}{1-\gamma} \mathbb{E}_{(s, a)\sim d_{\pi}}\left| \frac{\tilde{\pi}(a|s)}{\pi(a|s)}  - 1 \right|$. 

Second, we bound $\norm{A_{\pi}^{\tilde{\pi}}}_\infty$. 
\begin{equation*}
\norm{A_{\pi}^{\tilde{\pi}}}_\infty = \max_{s} \left| \mathbb{E}_{a\sim\tilde{\pi}}\big[A_{\pi}(s, a) \big] \right| = \max_{s} \left| \sum_{a}\big[\tilde{\pi}(a|s) A_{\pi}(s, a) \big] \right| \leq \max_{s, a}\left| A_{\pi}(s, a) \right|, 
\end{equation*}
The last inequality is a result of $\tilde{\pi}$ being a probability distribution. 
The resulting term is just $\xi$, as defined in Theorem~\ref{theo:ratio-reg-guarantee}. 

Combined, 
\begin{align*}
    \norm{\red{d_{\tilde{\pi}}} - \blue{d_{\pi}}}_1 \norm{A_{\pi}^{\tilde{\pi}}}_\infty \leq \frac{\xi\gamma}{1-\gamma} \mathbb{E}_{(s, a)\sim d_{\pi}}\left| \frac{\tilde{\pi}(a|s)}{\pi(a|s)}  - 1 \right|.
\end{align*}

Therefore, 
\begin{equation*}
J(\tilde{\pi}) - J(\pi) \geq \frac{1}{1-\gamma} \Bigg[\mathbb{E}_{(s, a)\sim d_\pi}\Big[ \frac{\tilde{\pi}(a|s)}{\pi(a|s)} A_{\pi}(s, a) \Big]
- \frac{\xi \gamma}{1-\gamma}\mathbb{E}_{(s, a)\sim d_{\pi} }\left|\frac{\tilde{\pi}(a|s)}{\pi(a|s)} - 1 \right| \Bigg]. 
\end{equation*}

\end{proof}

\subsection{Ratio-Regularized Policy Optimization (R2PO)}~\label{app:r2po}
According to Theorem~\ref{theo:ratio-reg-guarantee}, one can also formulate the ratio-regularized policy optimization. 
Namely, one can consider the following optimization problem
\begin{equation*}
\max_{\tilde{\pi}} \quad \mathbb{E}_{(s, a)\sim d_\pi}\Big[ \frac{\tilde{\pi}(a|s)}{\pi(a|s)} A_\pi(s, a)\Big]
- C \cdot \mathbb{E}_{(s, a)\sim d_{\pi} }\left|\frac{\tilde{\pi}(a|s)}{\pi(a|s)} - 1 \right|, 
\end{equation*}
where $C$ is the regularization coefficient. 
Optimizing the objective in this optimization amounts to 
altering ratios $\frac{\tilde{\pi}(a|s)}{\pi(a|s)}$ according to $A_{\pi}(s, a)$:
the ratios for $(s, a)$ should be increased when $A_{\pi}(s, a)>0$, and decreased when $A_{\pi}(s, a)<0$. 
At the same time, the ratios should be bounded such that the average deviation from $1.0$ is not too large. 
This idea matches exactly the formulation of ratio clipping PPO in Equation~\ref{equ:ppo-objective}, 
where the ratio clipping takes place within the expectation. 
We call the resulting algorithm Ratio-Regularized Policy Optimization (R2PO).


\subsection{Distributed Early Stopping Policy Optimization}~\label{app:distributed-espo}
The distributed version of ESPO is detailed in Algorithm~\ref{algo:despo}. 
We have experimented with other ways of reducing ratio deviations across all workers 
and have found that synchronously reducing to the minimum leads to better results in practice. 

\begin{algorithm}
    \caption{Distributed ESPO (for each worker)}
    \label{algo:despo}
    \begin{algorithmic}
        \FOR{iterations $i=1, 2,...$}
            \FOR{actor $=1, 2,...,N$}
                \STATE Run policy $\pi$ in environment
                \STATE Computer advantage estimates $\hat{A}_{\pi}$
            \ENDFOR
            \FOR{epoch $=1, 2,...,K$}
                \STATE Sample $M$ samples $\{(s, a)\}$ from previous rollouts.
                \STATE Compute the gradient
                \begin{equation*}
                \nabla_{\theta}\mathcal{L}(\theta) = - \nabla_{\theta} \frac{1}{M}\sum_{s, a}\frac{\tilde{\pi}_{\theta}(a|s)}{\pi(a|s)} \hat{A}_{\pi}(s, a).
                \end{equation*}
                \STATE Send the gradient to all other workers. 
                \STATE Receive the average gradient and update parameters.
                \STATE Estimate deviations $\Delta = \frac{1}{M}\sum_{s, a}\left|\frac{\tilde{\pi}_{\theta}(a|s)}{\pi(a|s)} - 1 \right|$. 
                \STATE Reduce to the minimum of $\Delta$ across all workers. 
                \IF{$\Delta > \delta $}
                    \STATE Break and early stop from optimization epochs. 
                \ENDIF
            \ENDFOR
            \STATE $\pi \leftarrow \tilde{\pi}_{\theta}$. 
        \ENDFOR
    \end{algorithmic}
\end{algorithm}

\subsection{Implementation details}\label{app:implementations}
PPO adds several implementation augmentation to the core algorithm, 
which include the normalization of inputs and rewards, 
the normalization of advantage at each iteration
and generalized advantage estimation~\citep{schulman2015high}.
We adopt the similar augmentations for ESPO and distributed ESPO. 

Specifically, following~\citep{schulman2017proximal,heess2017emergence}, 
we perform the following normalization steps:
\begin{itemize}
    \item We normalize the observations by subtracting the mean 
    and dividing by the standard deviation using statistics aggregated over the entire training process. 

    \item We scale the reward by a running estimate of its standard deviation, 
    which is also aggregated over the entire training process. 

    \item We use per-batch normalization for the advantages. 
\end{itemize}

Networks use \emph{tahn} as the activation function 
and parametrize the mean and standard deviation of a conditional Gaussian distribution over actions.
Network sizes are as follows: 64, 64, action\_dimension. 
Policy network and value network share the same network body. 
Other hyperparameters are given in Table~\ref{tab:hyperparameters}.

\begin{table}
    \centering
    \begin{tabular}{cc}
        \toprule
        hyperparameter & Value \\
        \midrule
        Sampling batch size & 2048 \\
        Epoch batch size & 32 \\
        $\lambda$ & 0.95 \\
        $\gamma$ & 0.99 \\
        Maximum number of optimization epochs & 20 \\
        Entroy coefficient & 0.0 \\
        Initial learning rate & 0.0003 \\
        Learning rate decay & linear \\
        \bottomrule
    \end{tabular}
    \caption{Hyperparameters for all experiments.}
    \label{tab:hyperparameters}
\end{table}

\begin{table}
    \centering
    \begin{tabular}{rcc}
    \toprule
    \multicolumn{1}{r}{\multirow{2}{*}{\textbf{Env.}}}
    & \multicolumn{1}{c}{\multirow{2}{*}{$\mathcal{S}\times\mathcal{A}$}} \\ 
    \multicolumn{1}{c}{}& \multicolumn{1}{c}{}& \\
    \midrule
    Hopper  & $\mathbb{R}^{11}\times\mathbb{R}^{3}$ \\ 
    HalfCheetah  & $\mathbb{R}^{17}\times\mathbb{R}^{6}$ \\ 
    Walker2d & $\mathbb{R}^{17}\times\mathbb{R}^{6}$ \\
    Ant & $\mathbb{R}^{111}\times\mathbb{R}^{8}$ \\
    Humanoid & $\mathbb{R}^{376}\times\mathbb{R}^{17}$ \\
    HumanoidStandup & $\mathbb{R}^{376}\times\mathbb{R}^{17}$ \\\midrule
    dm\_humanoid.stand & $\mathbb{R}^{67}\times\mathbb{R}^{21}$ \\
    dm\_humanoid.walk & $\mathbb{R}^{67}\times\mathbb{R}^{21}$ \\
    dm\_humanoid.run & $\mathbb{R}^{67}\times\mathbb{R}^{21}$ \\
    \bottomrule
    \end{tabular}
    \caption{Control complexity of different domains.}
    \label{tab:control-complexity}
\end{table}

\subsection{Additional experiment results}\label{app:more-results}
\paragraph{PPO ablations.}
Without learning rate annealing, 
ratio clipping PPO may not be able to train policies in general.
Figure~\ref{fig:ppo-no-lr-decay} presents the PPO training curves in Mujoco benchmarks. 
Even with different clipping ranges, 
ratio clipping PPO quickly plateaus in more complicated control tasks, e.g., Ant and Humanoid.
Even with proper learning rate annealing and one manages to improve the performance, 
as presented in Figure~\ref{fig:ppo-lr-decay}, 
it is still non-trivial to find the good values for control with high dimensions, e.g., in Humanoid. 

\begin{figure*}
    \centering
    \includegraphics[width=\linewidth]{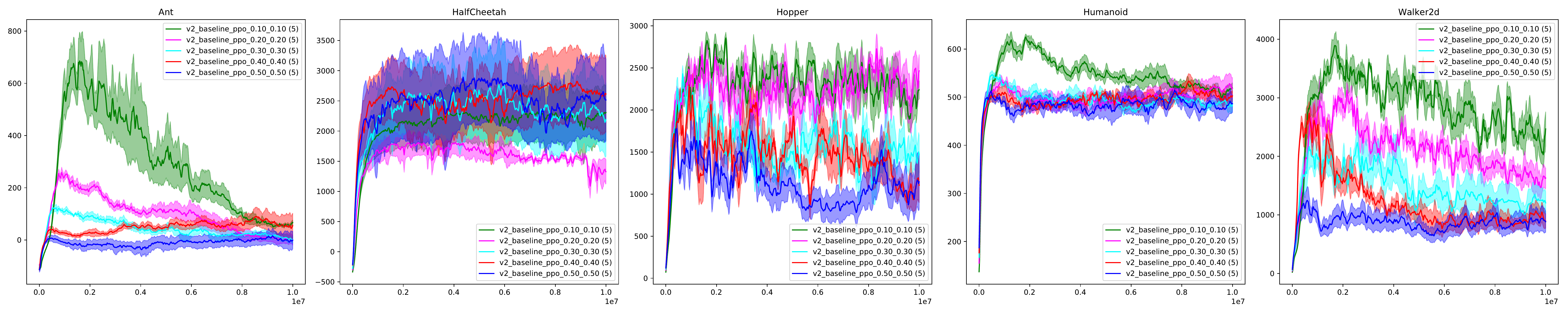}
    \caption{Ratio clipping PPO with fixed learning rate.}
    \label{fig:ppo-no-lr-decay}
\end{figure*}

\begin{figure*}
    \centering
    \includegraphics[width=\linewidth]{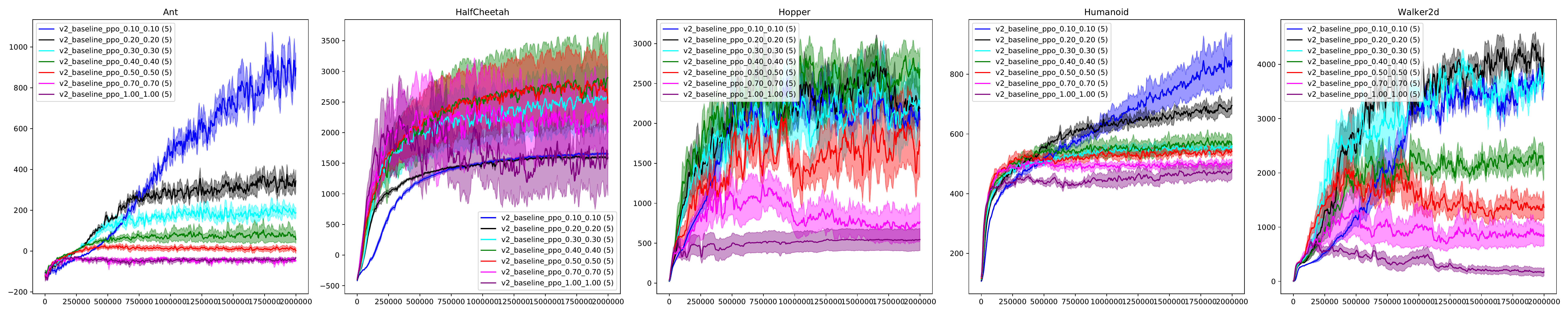}
    \caption{Ratio clipping PPO with proper learning rate annealing.}
    \label{fig:ppo-lr-decay}
\end{figure*}

\paragraph{Ratio-Regularized Policy Optimization (R2PO) ablations}
We also report the R2PO performance and compare it againt PPO and ESPO. 
Specifically, we ablate on the regularization coefficient to investigate how sensitive R2PO is to this hyper-parameter. 
The ablation results are presented in Figure~\ref{fig:ablation-ratio-reg}. 
Figure~\ref{fig:ppo-vs-r2po} presents a comparison between ratio clipping PPO and R2PO. 
When the control becomes very high, e.g., Humanoid, 
R2PO significantly outperforms PPO.

\begin{figure*}
    \centering
    \includegraphics[width=\linewidth]{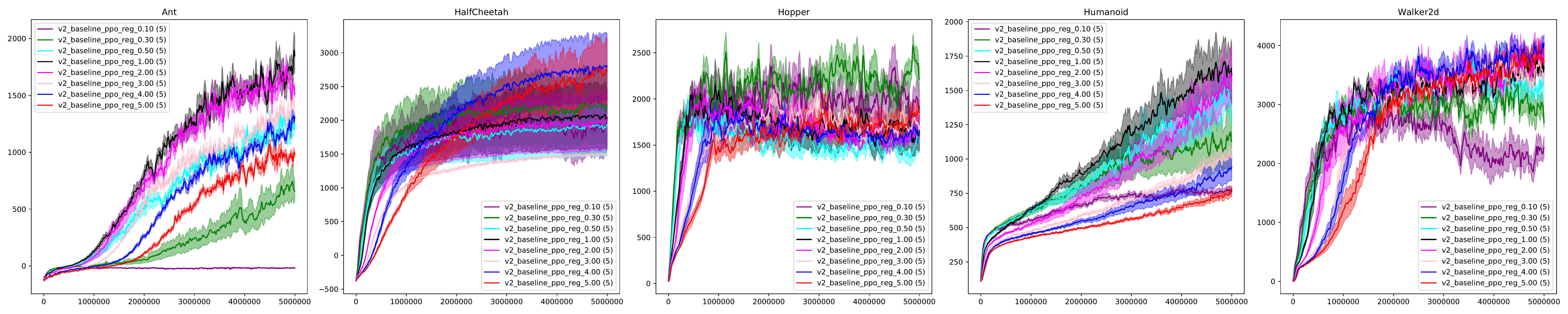}
    \caption{Episodic returns for ablating the regularization coefficient in R2PO.}
    \label{fig:ablation-ratio-reg}
\end{figure*}

\begin{figure*}
    \centering
    \includegraphics[width=\linewidth]{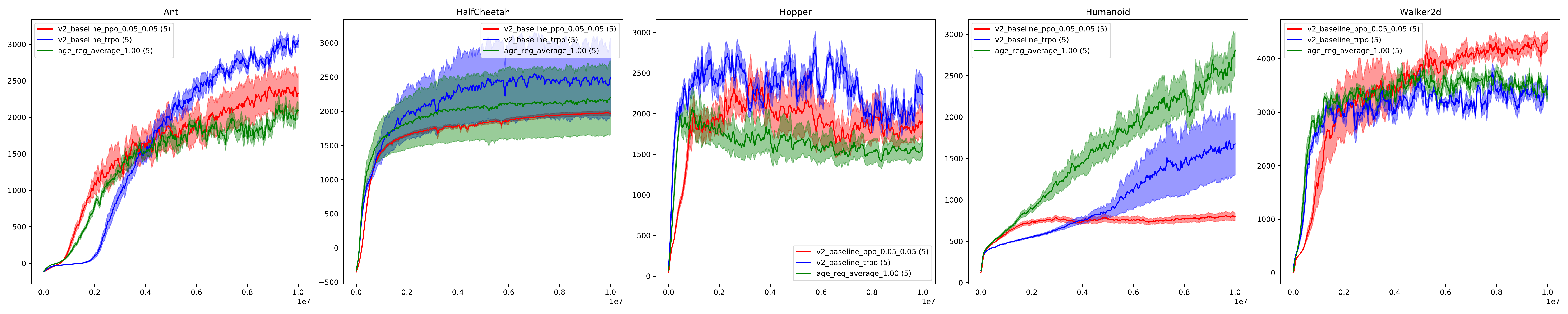}
    \caption{Ratio clipping PPO vs ratio regularized policy optimization.}
    \label{fig:ppo-vs-r2po}
\end{figure*}

\begin{figure}
    \centering
    \includegraphics[width=\linewidth]{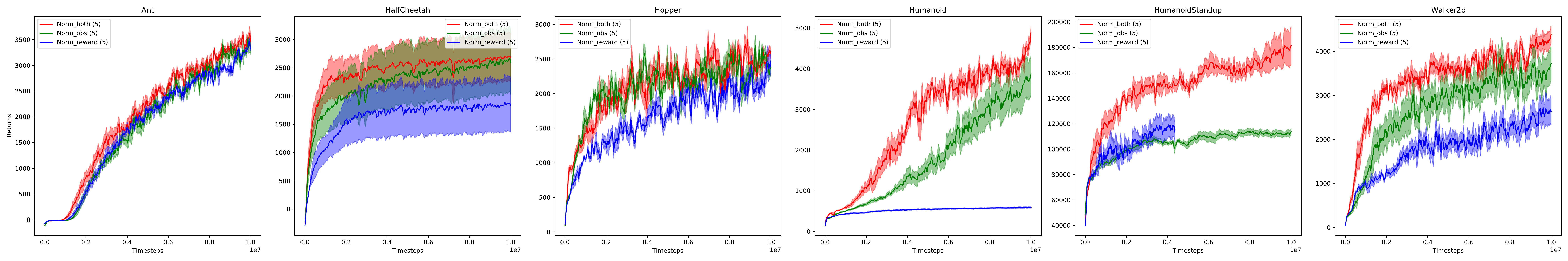}
    \caption{Ablation on different types of normalization on full Mujoco domains, i.e., normalizing observation or reward, or normalizing both.}
    \label{fig:ablate-norm-full}
\end{figure}

\begin{figure*}
    \centering
    \includegraphics[width=0.6\linewidth]{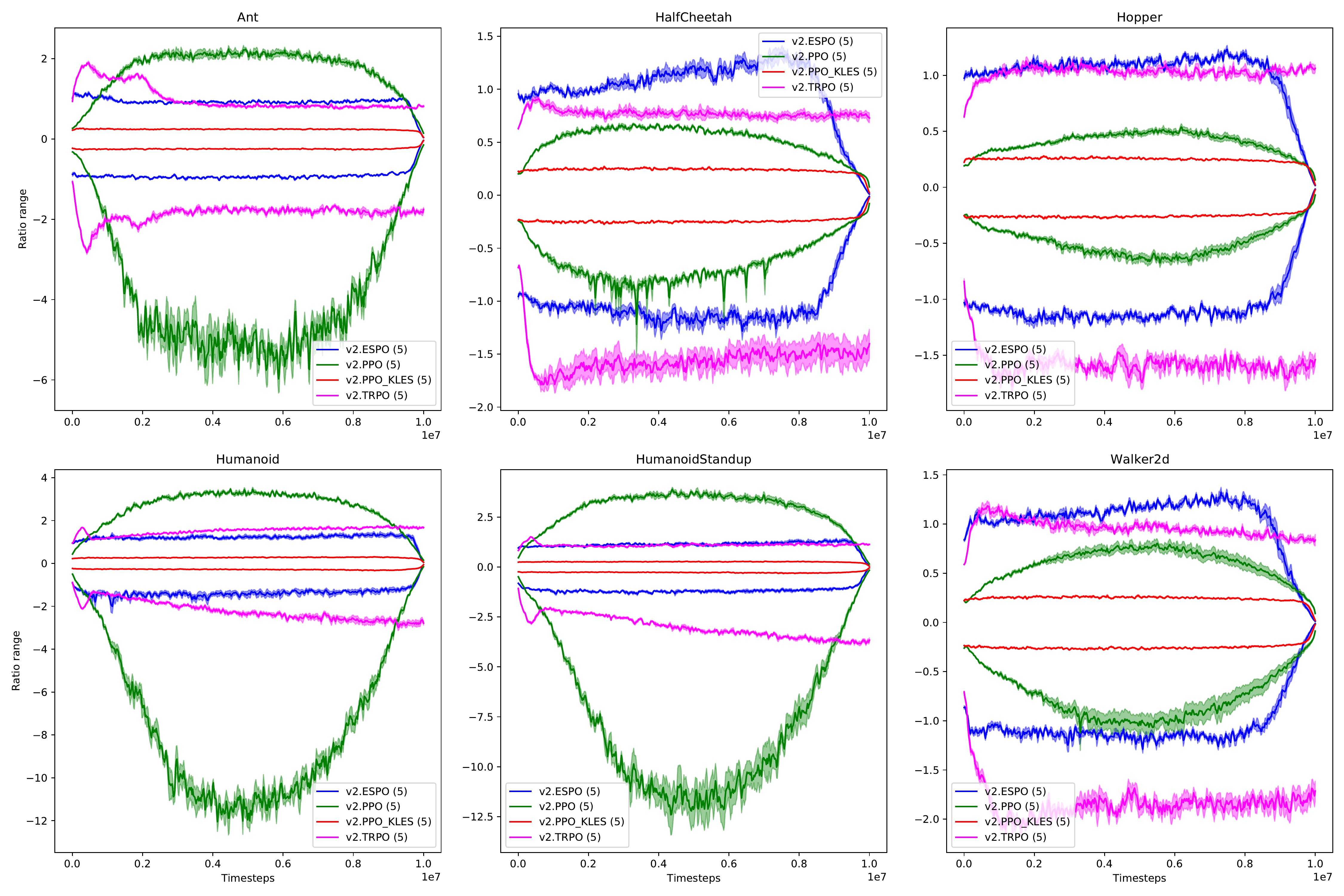}
    \caption{Empirical estimate of the ratio ranges in Mujoco tasks.}
    \label{fig:mujoco-ratios-full}
\end{figure*}

\begin{figure}
    \centering
    \includegraphics[width=0.4\linewidth]{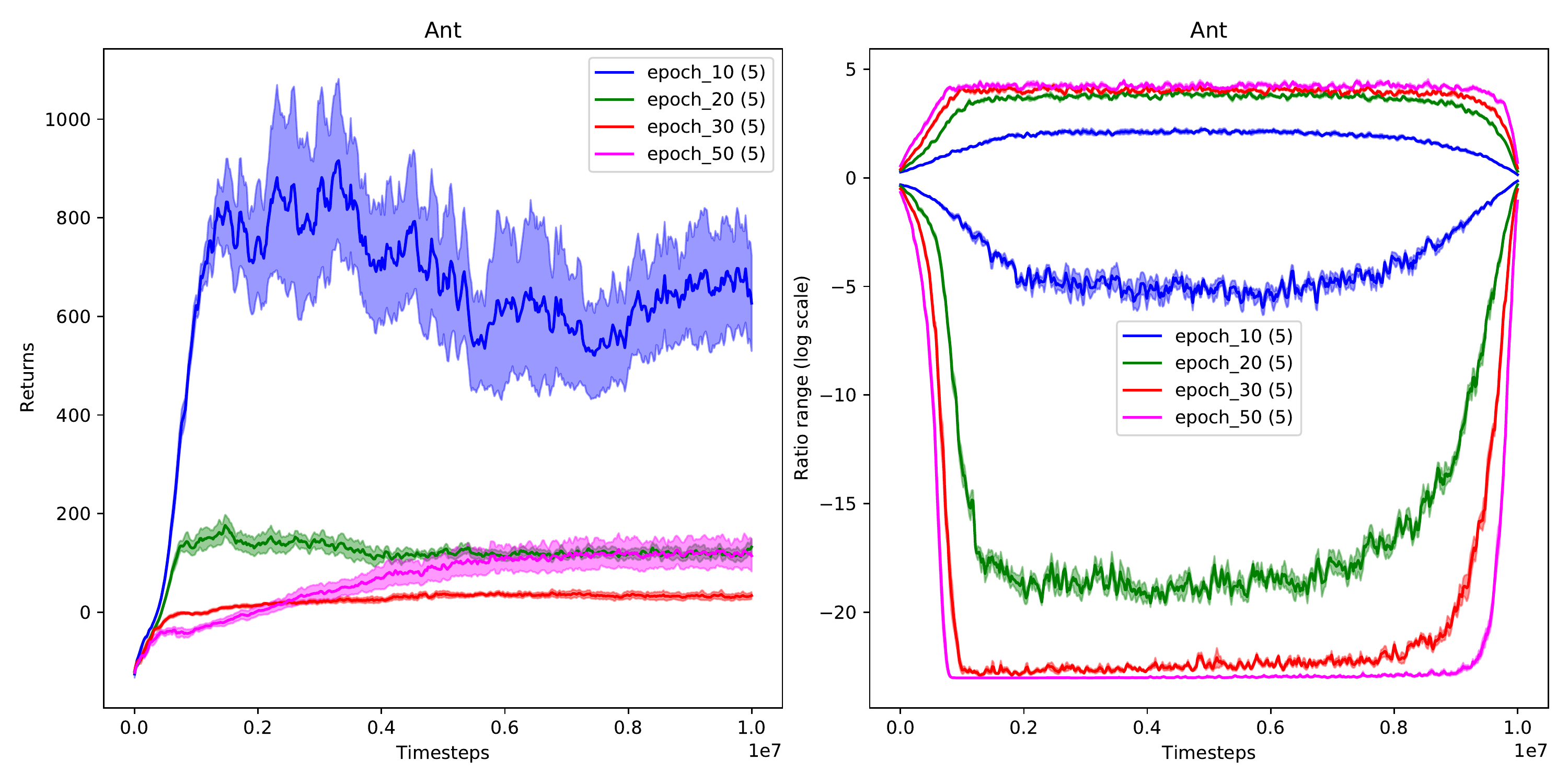}
    \caption{Empirical returns and ratio ranges (log scale) of RC-PPO trained on Ant-v2 with different numbers of optimization epochs; the learning rate decays linearly from $0.0003$ to $0$; 
    the clipping range is set to $0.1$.}
    \label{fig:ant-ratio-range-epoch}
\end{figure}

\begin{figure}
    \centering
    \includegraphics[width=0.4\linewidth]{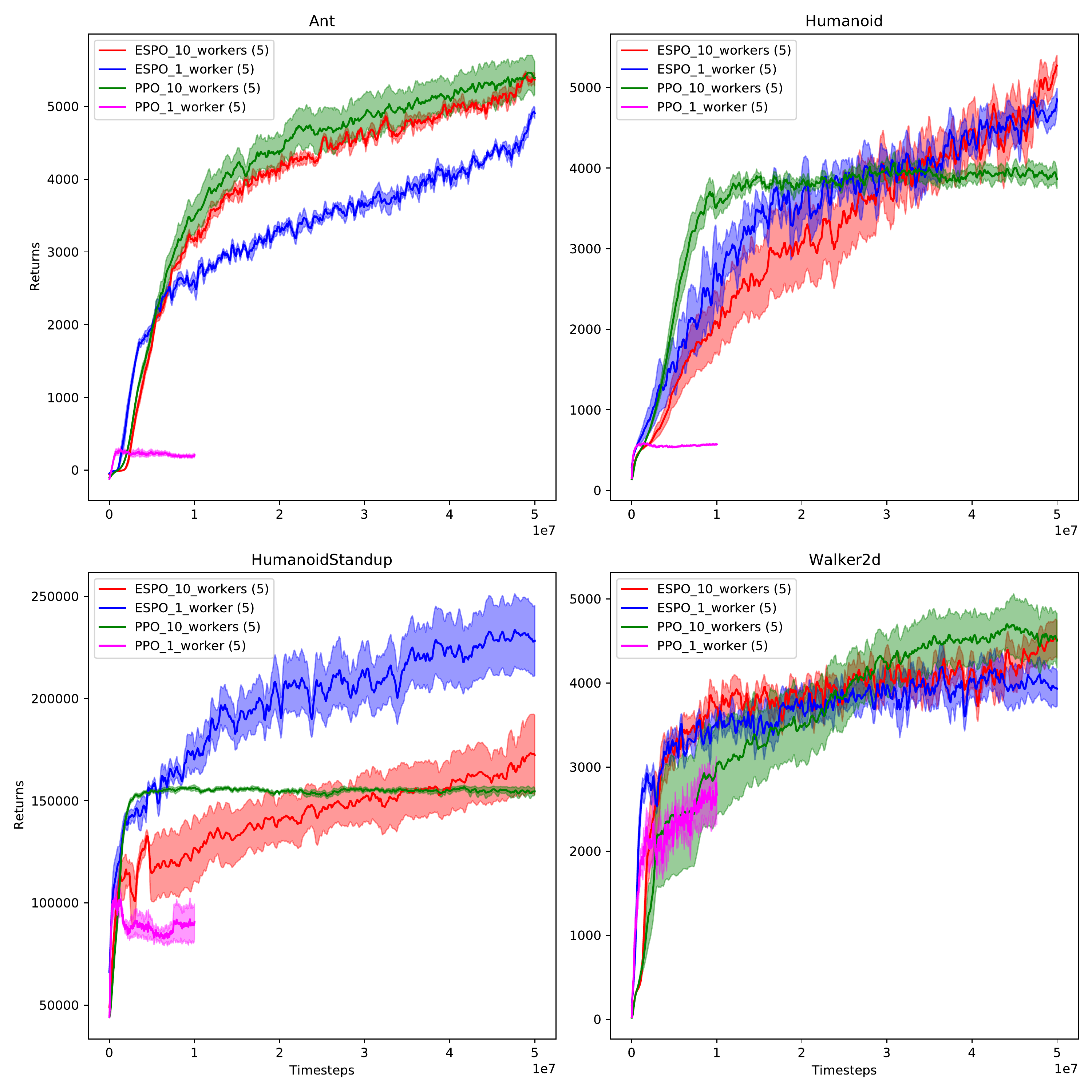}
    \caption{Comparing ESPO with the distributed ratio clipping PPO (DPPO) trained with 10 workers, i.e., PPO-10-workers. 
    The performance of PPO trained in a single process is denoted as PPO-1-worker. 
    ESPO is also scaled up to 10 workers, denoted as ESPO-10-workers.}
    \label{fig:espo-vs-dppo}
\end{figure}

\begin{figure*}
    \centering
    \includegraphics[width=0.8\linewidth]{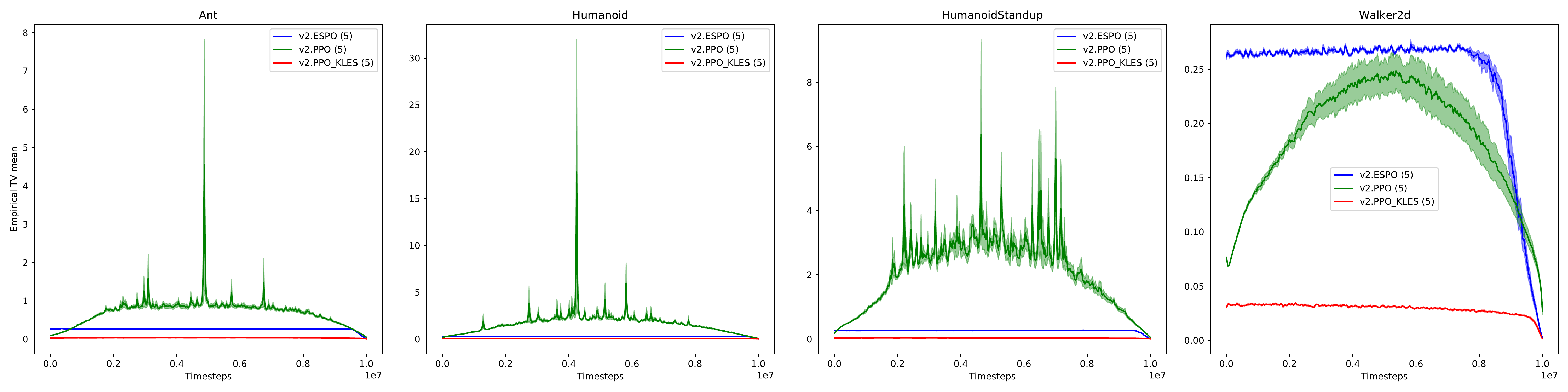}
    \caption{Empirical estimate of the expected ratio deviations in Mujoco tasks.}
    \label{fig:mujoco-empirical-tv}
\end{figure*}

\begin{figure*}
    \centering
    \includegraphics[width=0.6\linewidth]{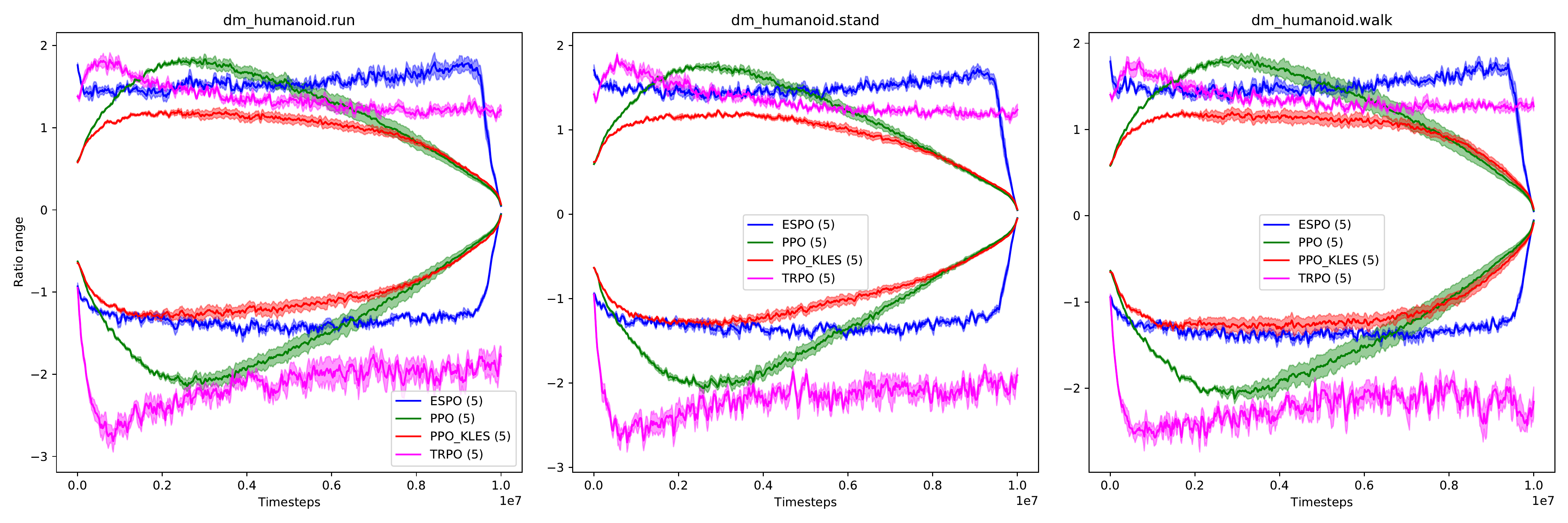}
    \caption{Empirical estimate of the ratio ranges in DeepMind control suite tasks.}
    \label{fig:dm-ratios-full}
\end{figure*}

\begin{figure*}
    \centering
    \includegraphics[width=0.6\linewidth]{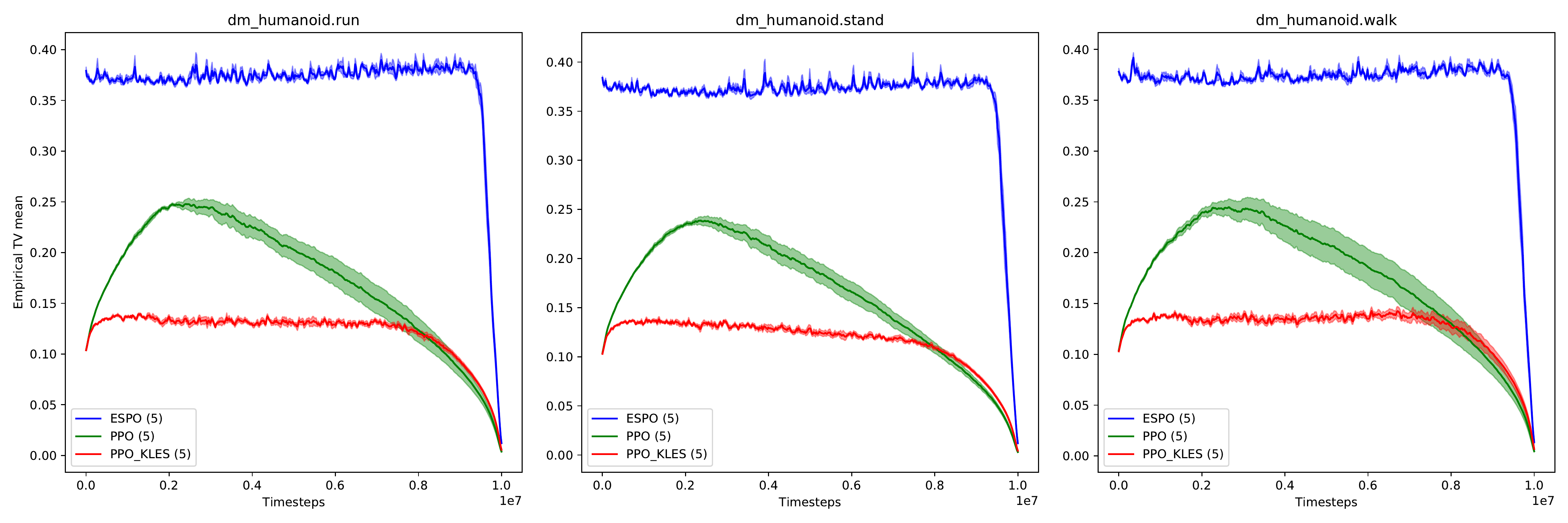}
    \caption{Empirical estimate of the expected ratio deviations in DeepMind control suite tasks.}
    \label{fig:dm-tv-full}
\end{figure*}

\begin{figure*}
    \centering
    \includegraphics[width=0.6\linewidth]{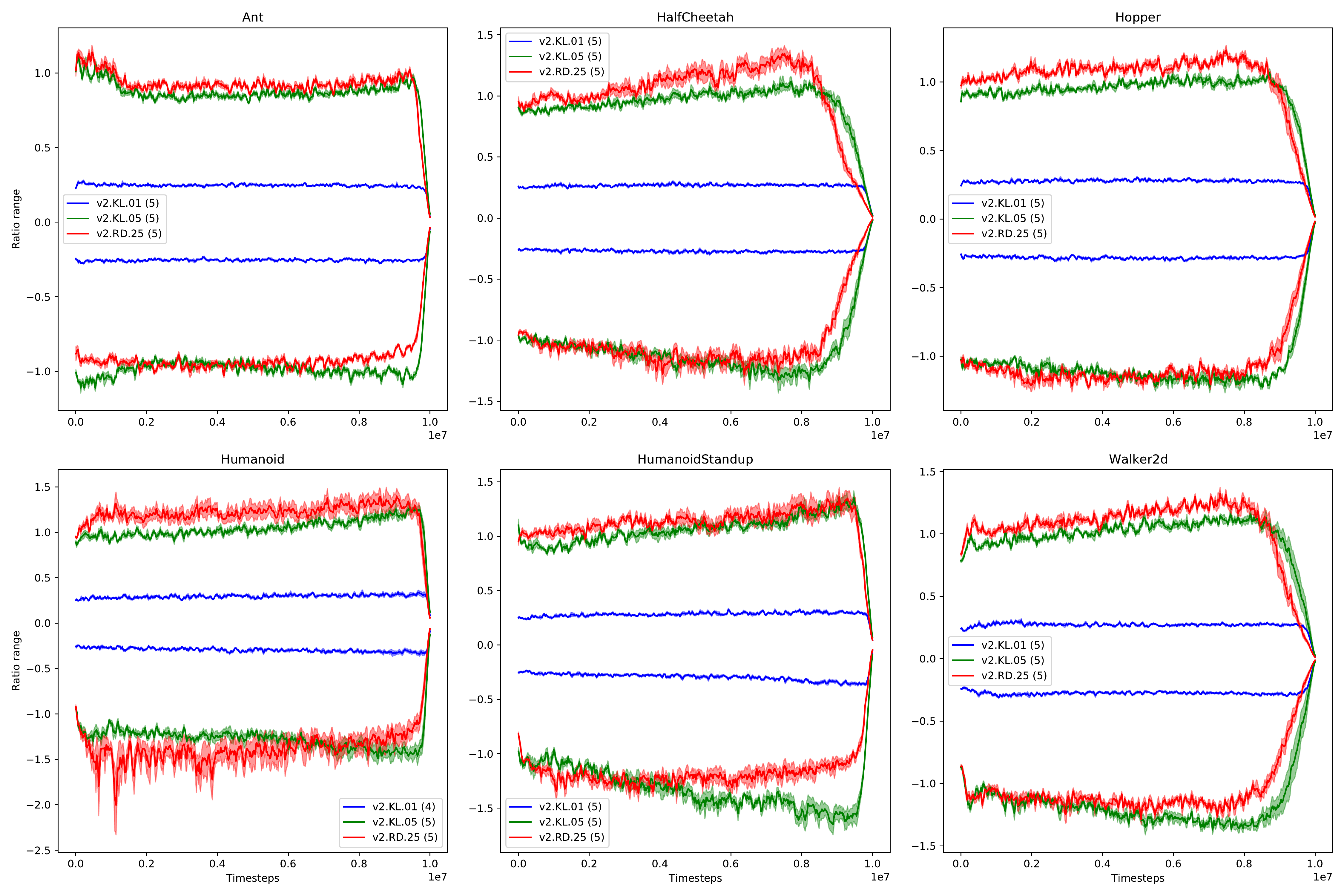}
    \caption{Empirical estimate of the ratio ranges for different ways of early stopping.}
    \label{fig:rd-vs-kl-ratios}
\end{figure*}

\begin{figure*}
    \centering
    \includegraphics[width=0.6\linewidth]{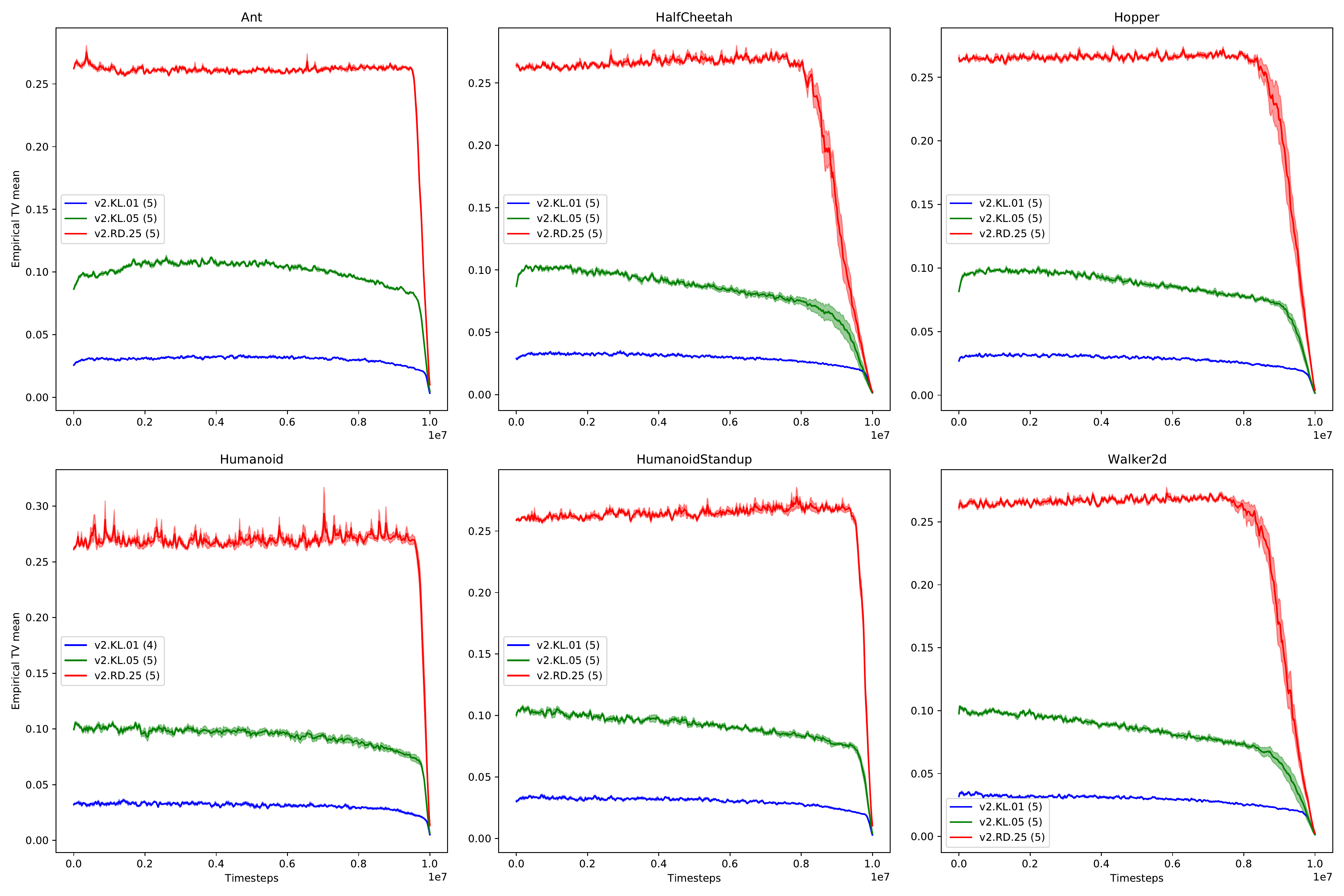}
    \caption{Empirical estimate of the expected ratio deviations for different ways of early stopping.}
    \label{fig:rd-vs-kl-tv}
\end{figure*}

\begin{figure*}
    \centering
    \includegraphics[width=\linewidth]{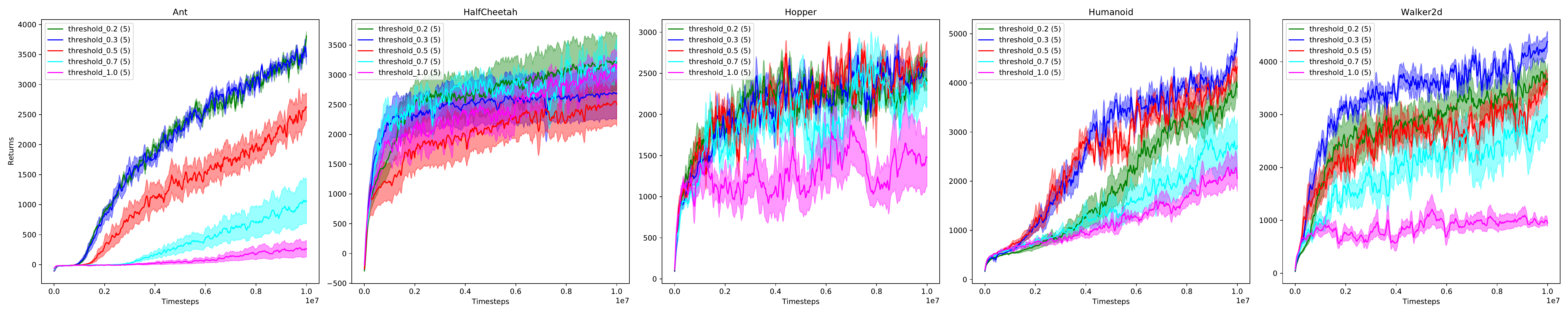}
    \caption{Ablation on the threshold values on full Mujoco domains, i.e., when to drop out from the multiple optimization epochs.}
    \label{fig:ablate-threshold-full}
\end{figure*}

\subsection{Ablating on the normalization}

We also find that normalization is crucial to the strong performance of ESPO (also to PPO and TRPO) in some domains. 
We present an ablation study on two types of normalization in Figure~\ref{fig:ablate-norm}:
observation normalization and reward normalization. 
Specifically, normalization has little impact on  performance in Ant but dramatically affects performance in Humanoid. 
Both domains have roughly the same number of dimensions in the observation space: 
$29$ for Ant and $33$ for Humanoid (see Table~\ref{tab:hyperparameters} for more details).
Nevertheless, neither observation nor reward normalization worsens performance. 
Thus, it may be better to normalize both in practice. 
Also see Figure~\ref{fig:ablate-norm-full} 
for ablation results on the full set of Mujoco domains. 

\end{document}